\definecolor{darkred}{RGB}{150,0,0}
\definecolor{darkgreen}{RGB}{0,150,0}
\definecolor{darkblue}{RGB}{0,0,200}
\newtheorem{theorem}{Theorem}
\newtheorem{lemma}{Lemma}
\newtheorem{corollary}{Corollary}
\newtheorem{propo}{Proposition}
\newtheorem{definition}{Definition}
\newtheorem{remark}{Remark}
\renewcommand{\paragraph}[1]{\textbf{#1}.~}
\newenvironment{fminipage}%
  {\begin{Sbox}\begin{minipage}}%
  {\end{minipage}\end{Sbox}\fbox{\TheSbox}}
\newcommand{\Id}{\mathds{I}}
\newcommand{\Jd}{\mathds{J}}
\newcommand{\ones}{\mathbf{1}}
\newcommand{\R}{\mathds{R}}
\newcommand{\thm}{Theorem }
\newcommand{\cor}{Corollary }
\newcommand{\lem}{Lemma }
\newcommand{\defn}{Definition }
\newcommand{\rem}{Remark }
\newcommand{\ab}{\mathbf{a}}
\newcommand{\Ab}{\mathbf{A}}
\newcommand{\Bb}{\mathbf{B}}
\newcommand{\db}{\mathbf{d}}
\newcommand{\eb}{\mathbf{e}}
\newcommand{\Gb}{\mathbf{G}}
\newcommand{\gb}{\mathbf{g}}
\newcommand{\Hb}{\mathbf{H}}
\newcommand{\hb}{\mathbf{h}}
\newcommand{\Mb}{\mathbf{M}}
\newcommand{\Pb}{\mathbf{P}}
\newcommand{\Qb}{\mathbf{Q}}
\newcommand{\pb}{\mathbf{p}}
\newcommand{\qb}{\mathbf{q}}
\newcommand{\Rb}{\mathbf{R}}
\newcommand{\ub}{\mathbf{u}}
\newcommand{\Vb}{\mathbf{V}}
\newcommand{\vb}{\mathbf{v}}
\newcommand{\xb}{\mathbf{x}}
\newcommand{\Gammab}{\mathbf{\Gamma}}
\newcommand{\Deltab}{\mathbf{\Delta}}
\newcommand{\mub}{\boldsymbol{\mu}}
\newcommand{\thetab}{{\boldsymbol{\theta}}}
\newcommand{\lambdab}{\boldsymbol{\lambda}}
\newcommand{\Cc}{\mathcal{C}}
\newcommand{\Lc}{\mathcal{L}}
\newcommand{\Sc}{\mathcal{S}}
\newcommand{\Gbtilde}{\widetilde{\Gb}}
\newcommand{\rhobar}{\overline{\rho}}
\newcommand{\nn}{\nonumber}
\providecommand{\norm}[1]{\lVert#1\rVert}
\newcommand{\tr}{\operatorname{tr}}
\newcommand{\diag}{\operatorname{diag}}
\newcommand{\rank}{\operatorname{rank}}
\newcommand{\nmin}{n_{\operatorname{minor}}}
\newcommand{\nmaj}{n_{\operatorname{maj}}}
\newcommand{\Hbopt}{\Hb^*}
\newcommand{\opt}[1]{#1^*}
\begin{document}
\title{Supervised Contrastive Representation Learning:\\  Landscape Analysis with Unconstrained Features} 

\author{%
 \IEEEauthorblockN{Tina Behnia, Christos Thrampoulidis}
 \IEEEauthorblockA{Department of Electrical and Computer Engineering\\
                   University of British Columbia\\
                   Vancouver, BC, Canada\\
                   Email: \{tina.behnia, cthrampo\}@ece.ubc.ca}\thanks{This work is supported by an NSERC Discovery Grant, a CRG8-KAUST award, and funding from AML-TN program at UBC.}
}



\maketitle

\begin{abstract}
    Recent findings reveal that over-parameterized deep neural networks, trained beyond zero training-error, exhibit a distinctive structural pattern at the final layer, termed as Neural-collapse (NC). These results indicate that the final hidden-layer outputs in such networks display minimal within-class variations over the training set. While existing research extensively investigates this phenomenon under cross-entropy loss, there are fewer studies focusing on its contrastive counterpart, supervised contrastive (SC) loss. Through the lens of NC, this paper employs an analytical approach to study the solutions derived from optimizing the SC loss. We adopt the unconstrained features model (UFM) as a representative proxy for unveiling NC-related phenomena in sufficiently over-parameterized deep networks.  
    We show that, despite the non-convexity of SC loss minimization, all local minima are global minima. Furthermore, the minimizer is unique (up to a rotation). We prove our results by formalizing a tight convex relaxation of the UFM. Finally, through this convex formulation, we delve deeper into characterizing the properties of global solutions under label-imbalanced training data. 

\end{abstract}

\section{Introduction}
%
%

{Contrastive techniques (e.g., \cite{chen2020simple,tian2020contrastive}) have led to major improvements in the self-supervised representation learning, and recently, \cite{khosla2020supervised} introduced the supervised contrastive (SC) loss as an extension of these techniques tailored to the supervised problems. SC loss is applied on the embeddings generated by the network and it penalizes the model by measuring how similar or dissimilar the learned embeddings are for different samples based on their class membership. The objective with the SC loss is to bring the embeddings with the same label close to each other while keeping them apart from those belonging to other classes. 
%
In other words, the loss value emphasizes on the interactions of pairs of samples from the training set.
This is in contrast to other widely used loss functions like cross-entropy (CE) where the loss is computed on each sample individually, aiming to maximize the class-probability associated with the correct label. 
}

{
Focusing on the CE loss, \cite{NC} recently discovered a common pattern at the final layer of the over-parameterized deep-nets, whose number of parameters far exceeds the number of training samples. According to their observations, in the modern deep learning regime where deep-nets are trained beyond zero-training error, at the last layer, the network learns to compress the training data by forcing the embeddings to collapse to their class-means, exhibiting a low-rank structure. Furthermore, when the training data is balanced, the class-means exhibit a symmetric pattern. In particular, they form a simplex \emph{equiangular tight frame} (ETF), where all the class-mean vectors have equal norms and are maximally-separated. Importantly, this {simple} geometry appears to be ``\emph{cross-situationally invariant}'', as it appears at the final layer of various architectures trained on benchmark datasets \cite{NC}.
{Seeking to explain the optimality of this structure, \cite{NC} links the feature learning process to  an abstract optimal coding task. Treating each embedding as a codeword for the class membership, and the classification task as a decoding of the noisy codeword, \cite{NC} demonstrates that this shared geometry yields the optimal codebook. }
This discovery, termed as Neural collapse (NC), has led to a long line of follow-up works, e.g., \cite{fang2021exploring,galanti2021role,graf2021dissecting,han2021neural,hui2022limitations,ULPM,lu2020neural,mixon2020neural,tirer2022extended,xie2022neural,zhu2021geometric,zhou2022optimization}. The majority of these works focus on the CE and mean-squared-error loss, and a few consider analyzing contrastive techniques in this context. 
}


The NC discovery was an empirical attempt to demystify the black-box nature of the deep-nets and to more formally characterize what large networks learn at the training stage. NC also shares connections with the implicit bias literature (e.g., \cite{soudry2018implicit,ji2018risk,lyu2019gradient}) where the success of over-parameterization is attributed to the optimization algorithms, arguing that gradient-based algorithms inherently favor solutions with certain properties that generalize well. However, the emergence of NC in deep overparameterized networks remains hard to theoretically explain due to the complex interactions of their parameters and layers. This challenge motivated a series of theoretical studies to justify the NC phenomenon  from an optimization perspective (e.g., \cite{zhu2021geometric,seli,graf2021dissecting,behnia2023implicit,SCL+,mixon2020neural,fang2021exploring,zhou2022optimization,ULPM,zhou2022all}). These works rely on a proxy model called \emph{unconstrained features model} (UFM). This model leverges the powerful expressivity of large deep-nets to simplify the mathematical formulation of the training objective. This is made possible by only focusing on the interactions of the last-layer parameters of the network in the loss function. Despite its simplicity, UFM has proved a useful tool for predicting certain empirical patterns learned by deep-nets during training.




\subsection{Related Works}

\noindent\textbf{Neural-collapse.}~
After the influential paper \cite{NC}, important follow-up works introduced UFM \cite{mixon2020neural,fang2021exploring,graf2021dissecting} as an abstract model for theoretically studying the last-layer of over-parameterized deep-nets. In this model the focus is only on the structure of the parameters in the final layer that optimize the desired objective. Previous works \cite{mixon2020neural,fang2021exploring,graf2021dissecting,zhu2021geometric,zhou2022optimization,ULPM,seli,zhou2022all} have studied the UFM problem for the CE loss showing that with class-balanced training sets all the global solutions satisfy the NC property and form a simplex ETF as empirically reported by \cite{NC}. Despite being a non-convex problem, \cite{zhu2021geometric} proves that UFM with CE loss has a benign landscape: all local optimals are global optimal, and there are no spurious saddle points. Thus, any optimization algorithm that can escape saddle points, converges to the global optimal, that is the simplex ETF. While most of these works study the balanced setup, several works use the UFM to predict the learned final-layer embeddings for class-imbalanced training data \cite{fang2021exploring,seli,behnia2023implicit}, and support their theoretical findings by experiments on benchmark datasets and architectures. The geometric characterization of the embeddings under the mean-squared \cite{mixon2020neural,zhou2022optimization,tirer2022extended} and contrastive \cite{graf2021dissecting, fang2021exploring} loss have also been investigated by adopting the UFM. Several works \cite{hui2022limitations,galanti2021role,galanti2022improved} also explore the potential links of NC to generalization and down-stream tasks.

\noindent\textbf{Supervised contrastive loss.}~Unlike CE and mean-squared-error loss, there are fewer works on the geometry characterization of the constrastive loss through the lens of NC. 
The SC loss is known to exhibit more robustness to noise and hyper-parameter values \cite{khosla2020supervised}. Its resilience has inspired a series of works where the variants of the SC loss combined with CE are used for training models that tackle distribution shifts in the data with better performance \cite{jitkrittum2022elm,samuel2021distributional,gunel2020supervised,kang2021exploring,li2022targeted}. 
In the context of self-supervised learning, \cite{wang2020understanding} investigates the properties of the contrastive feature learning.
For supervised classification, \cite{graf2021dissecting} is the first to compare how the CE and SC loss encode the training data by studying the UFM. They show that with balanced training data, the global optimizer of the SC loss is ETF similar to CE and demonstrate that deep-nets trained with the SC loss reach this global optimizer. However, unlike CE, they emphasize on the fact that the optimality of this geometry can be violated depending on the mini-batch selection in practical scenarios. {Later, \cite{SCL+} suggests adjustments on the training to ensure learning an embedding geometry that is invariant to the randomness in batching and level of class-imbalance. }
Furthermore, \cite{yaras2022neural} conjecture that as with the CE loss, the UFM trained by the SC loss also has a benign landscape with no spurious local minima despite its non-convex nature. Their  claim is motivated by numerical verifications that show we can reach the global solution of the UFM by running simple gradient methods for minimizing the SC loss.


\subsection{Contributions}
In this work, we theoretically analyze the local solutions of the SC loss under the UFM assumptions. We first demonstrate that all the stationary points exhibit the collapse property (see \defn\ref{def:NC}). Leveraging this low-rank structure at the stationary points, we prove that the SC loss has a benign landscape: we show that although the optimal solution is not unique, all the local solutions only differ up to a rotation and are equivalent to the global solution. 
Our results rely on introducing a tight convex relaxation of the UFM. 
As a consequence of this analysis, we are able to simplify finding the global optimizer into a lower-dimensional equivalent program under the STEP imbalanced assumption on the training set. For the special case of balanced training data, this program reveals ETF as the optimal geometry as also previously shown in \cite{graf2021dissecting}. However, ETF is not a global solution otherwise. 

\noindent\textbf{Notations.}~We use $[n]$ to denote the set $\{1,2,...,n\}$. For matrix $\Vb_
{m\times n}$, $\Vb_{i,j}$ denotes its $(i,j)$-th entry, $\vb_j$ the $j$-th \emph{column}, and $\Vb^\top$ its transpose. When $m=n$, $\Vb\succeq0$ is a PSD matrix and $\diag(\Vb)$ is a diagonal matrix that includes the diagonal elements of $\Vb$.
We use $\ones_m$ and $\Id_m$ to denote the $m$-dimensional vector of all ones and the identity matrix respectively. Finally, $\otimes$ refers to the Kronecker product.

\section{Problem Setup}
{Consider a supervised classification setup with $k$ classes and a training set $\Sc = \{(\xb_i,y_i): i \in [n]\}$ 
where $\xb_i\in\R^p$ and  $y_i\in[k]$ are the $i$-th data sample and its class label. 
We also let $n_c\geq1$, $c\in[k]$ to be the number of examples in class $c$.
}

{We consider the deep-net as a non-linear encoder $\hb_\thetab:\R^p\to\R^d$ mapping the input space to a new embedding space that is parameterized by the trainable variables $\thetab$. Define $\hb_{\thetab,i}:=\hb_\thetab(\xb_i),\,i\in[n]$ and let
$\Hb_\thetab:=[\hb_{\thetab,1},...,\hb_{\thetab,n}]_{d\times n}$ be the training embeddings learned by the deep-net. With the SC loss, we seek to find the optimal parameters by minimizing $\Lc_{\text{SC}}$ defined as follows:}
\begin{align}\label{eq:scl_loss}
    \Lc_\text{SC}(\Hb_\thetab;\tau)={\frac{1}{n}}\sum_{i\in[n]}\frac{-1}{n_{y_i}-1}\sum_{\substack{j \neq i\\ y_j = y_i}}\log\bigg( \frac{e^{{\hb_{\thetab,i}^\top\hb_{\thetab,j}} / {\tau}}}{\sum\limits_{\ell \neq i} e^{\hb_{\thetab,i}^\top\hb_{\thetab,\ell} / \tau}}\bigg).
\end{align}
{Here, the scalar $\tau$ is a positive temperature hyper-parameter. This parameter was introduced in the loss by \cite{khosla2020supervised} as they observed it can boost the performance of the model by controlling the smoothness of the loss.}

By minimizing the SC loss, we favor parameters $\thetab$ that pull the examples belonging to the same class closer while simultaneously pushing the samples from different classes away from each other. Training a model with this loss usually involves two stages: first we learn the embeddings $\hb_\thetab(\cdot)$ by minimizing the SC loss, and second, we train a (simple) classifier separately on top of the learned representations. In this work, we focus on the first representation learning stage.

\begin{remark}\label{rem:norm}
    In the definition of the SC loss, introduced by \cite{khosla2020supervised}, the embeddings $\hb_\thetab(\cdot)$ are normalized to the unit sphere. In other words, the loss is a function of the cosine similarity of the embeddings learned by the deep-net. Alternatively, we can assume the deep-net at the final layer is followed by a normalization layer that maps the embeddings to the unit sphere so that  $\norm{\hb_\thetab(\xb_i)}_2 = 1$.
\end{remark}

%

\subsection{Unconstrained Features Model (UFM)}

{In order to develop a theory for the training dynamics of deep-nets, we ought to capture the interplay between several non-linear hidden layers of the network. This makes the theoretical analysis of deep-networks challenging and motivates us to simplify the interactions between layers. UFM \cite{mixon2020neural,fang2021exploring,graf2021dissecting} serves as an abstract model that gives (partial) theoretical justification of the discovery made by \cite{NC}. Focusing on the last layer of the network, UFM relies on the power of over-parameterization, and assumes that large deep-nets are expressive enough to (approximately) map any point in the input space to any point in the embedding space. Hence, in this model, we assume deep-nets can learn \emph{unconstrained features} and treat the last-layer embeddings as free optimization parameters. In other words, we relax the dependence of the embeddings $\Hb_\thetab$ on the network parameters $\thetab$ and optimize the problem directly over $\Hb$. Thus, under UFM, our problem reduces to:}
\begin{align}\label{eq:SCL_UFM_1}
    \opt{\Hb} \in \arg\min_{\Hb} \Lc_{\text{SC}}
    (\Hb;\tau)~~\text{{subj. to}}~\,\|\hb_i\|^2_2 = 1,\, \forall i\in[n].
\end{align}

With this simplification, we study the patterns appearing in large deep-nets, restricting our attention to the interactions of the only element that the NC property is concerned with: the last-layer embeddings. It is also important to highlight that the objective of this optimization problem is still nonconvex as the loss is computed over the non-linear interactions of embeddings, i.e., $\Hb^\top\Hb$.
%
\begin{remark}
    Throughout this paper, we use a slightly different terminology than the Neural-collapse literature. We borrow the terminology from \cite{behnia2023implicit}: we use \emph{Neural-collapse} (NC) particularly when referring to the collapse of the embeddings, and {(implicit) geometry}
    when referring to the arrangement of the last-layer feature-embeddings $\Hb$ (e.g., ETF in balanced case). More formally, by NC, we mean the following:
\end{remark}
\begin{definition}[Neural-collapse (NC) property]\label{def:NC}
    Consider the embeddings matrix $\Hb$. NC holds for $\Hb$ if $\hb_i=\hb_j$, $\forall\, i,j: y_i=y_j$. 
\end{definition}

\subsection{Practices in Contrastive Learning}\label{sec:scl_model}
%
Contrastive learning involves several heuristics for achieving optimal performance in practice. We choose to overlook some of these common practices to simplify the analysis. We elaborate on these assumptions in this section. While not being fully consistent with the practice, we believe this preliminary analysis can provide useful insights for future works. 

\noindent\textbf{Computational cost.}~Due to the pair-wise comparison of the embeddings in the loss function, SC loss has a high computational cost when computed over the whole training set as in \eqref{eq:scl_loss}. As a result, in practice, the loss is approximated over random batches chosen from the training set, instead of the whole set $\Sc$. However, in our theory we consider the full-batch scenario \eqref{eq:scl_loss}.

\noindent\textbf{Projection head.}~At the training stage, the SC loss is not directly computed on the embeddings. In fact, the embeddings are passed through a projection layer (which can be as simple as a single-layer or shallow MLP), and the loss is minimized on the projected embeddings. This projection layer is discarded at test time, as it is empirically shown that using the pre-projection embeddings yield better performance for downstream tasks. In our analysis, we ignore the projection head of the network, and focus on studying the embeddings on which the loss is directly computed. This is similar to the analysis performed by previous works \cite{graf2021dissecting,fang2021exploring,wang2020understanding}.

\noindent\textbf{Embedding constraints.}~As discussed in \rem\ref{rem:norm}, the SC loss is computed over unit norm embeddings. Since, constraining the norms to the sphere yields a non-convex feasibility set, we relax the norm constraints to a ball of radius 1. On the other hand, we can view the temperature parameter $\tau$ as a scaling on the embeddings $\hb_i$. Specifically, it is easy to see that $\Lc_\text{SC}(\Hb;\tau) = \Lc_\text{SC}(\widetilde\Hb;1)$ where $\widetilde\Hb=\Hb/\sqrt{\tau}$ and as a result $\norm{\widetilde\hb_i}^2_2=1/\tau$ for all $i\in[n]$. In other words, after relaxing the constraints, we study the UFM formulated as follows,
%
%
%
%
%
\begin{align}\label{eq:SCL_UFM}
    \opt{\Hb} \in \arg\min_{\Hb} \Lc_{\text{SC}}
    (\Hb)~~\text{{subj. to}}~\,\|\hb_i\|^2_2 \leq \frac{1}{\tau},\, \forall i\in[n],
\end{align}
where $\Lc_{\text{SC}}
    (\Hb):= \Lc_{\text{SC}}
    (\Hb;\tau=1)$.
Here we recall that, the minimization is over the \emph{unconstrained} embeddings $\Hb$ that do not depend on the network parameters $\thetab$. Although we simplified the problem to deal with a convex feasibility set, the problem \eqref{eq:SCL_UFM} still has a non-convex objective function.

\section{Results}
In this section, we perform an analysis of the local minimizers of the SC loss with the UFM \eqref{eq:SCL_UFM}. As part of the analysis, we rely on the characterization of a tight convex relaxation of \eqref{eq:SCL_UFM} which we use later for taking a closer look at the structure of the global solutions. We first start by stating a general property of the local optimal points of \eqref{eq:SCL_UFM}.
{
\subsection{NC Holds at Local Optimal Solutions}
}


    


{
Let $n_\text{max}=\max_{c\in[k]} n_c$ be the size of the largest class, and recall the necessary condition for local optimality in a general optimization with convex constraints.
\begin{propo}[Necessary stationarity condition {\cite[Chapter 2]{bertsekas1999}}]\label{cond:nec_local}
Consider the optimization problem $\min_{\xb\in\Cc} f(\xb)$ over a convex set $\Cc$. Let $\widehat{\xb}\in\Cc$ be a local minimum. Then, 
    \begin{align}
         & \nabla f(\widehat{\xb})^\top (\xb-\widehat{\xb}) \geq 0, &\quad\forall \xb\in\Cc.\label{eq:1storder}
    \end{align}
\end{propo}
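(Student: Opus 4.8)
The plan is to reduce the claim to a one-dimensional statement about directional derivatives along feasible line segments, exploiting convexity of $\Cc$. Fix an arbitrary $\xb\in\Cc$. Because $\Cc$ is convex, the entire segment $\widehat{\xb}+t(\xb-\widehat{\xb})=(1-t)\widehat{\xb}+t\xb$ lies in $\Cc$ for every $t\in[0,1]$, so these points are admissible perturbations of $\widehat{\xb}$. I would then define the scalar function $\phi(t):=f\big(\widehat{\xb}+t(\xb-\widehat{\xb})\big)$ for $t\in[0,1]$, i.e.\ the restriction of $f$ to the feasible segment joining $\widehat{\xb}$ and $\xb$.

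Next I would invoke local optimality. Since $\widehat{\xb}$ is a local minimum of $f$ over $\Cc$, there is a neighborhood of $\widehat{\xb}$ in which no feasible point attains a strictly smaller value; in particular, for all sufficiently small $t>0$ we have $\phi(t)=f\big(\widehat{\xb}+t(\xb-\widehat{\xb})\big)\geq f(\widehat{\xb})=\phi(0)$. Hence the difference quotient $\big(\phi(t)-\phi(0)\big)/t\geq 0$ for all small $t>0$.

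Finally I would pass to the limit $t\to 0^+$. Assuming $f$ is differentiable at $\widehat{\xb}$ (as is implicit in writing $\nabla f(\widehat{\xb})$), the chain rule gives the one-sided derivative $\phi'(0^+)=\nabla f(\widehat{\xb})^\top(\xb-\widehat{\xb})$, and taking the limit in the nonnegative difference quotient yields $\nabla f(\widehat{\xb})^\top(\xb-\widehat{\xb})\geq 0$. Since $\xb\in\Cc$ was arbitrary, \eqref{eq:1storder} follows.

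There is no substantive obstacle here: the only points requiring care are that the perturbation directions $\xb-\widehat{\xb}$ remain feasible --- which is exactly what convexity of $\Cc$ provides --- and that the relevant one-sided directional derivative exists and equals $\nabla f(\widehat{\xb})^\top(\xb-\widehat{\xb})$, which follows from differentiability of $f$. Equivalently, one can phrase the same argument without explicit line segments by writing $f\big(\widehat{\xb}+t(\xb-\widehat{\xb})\big)=f(\widehat{\xb})+t\,\nabla f(\widehat{\xb})^\top(\xb-\widehat{\xb})+o(t)$ and observing that a strictly negative inner product would force the right-hand side below $f(\widehat{\xb})$ for all small $t>0$, contradicting local optimality; this is the standard textbook argument and is why the result is simply quoted from \cite{bertsekas1999}.
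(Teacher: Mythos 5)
Your argument is correct and is precisely the standard textbook proof: convexity of $\Cc$ makes the segment $(1-t)\widehat{\xb}+t\xb$ feasible, local optimality forces the difference quotient to be nonnegative for small $t>0$, and passing to the limit gives $\nabla f(\widehat{\xb})^\top(\xb-\widehat{\xb})\geq 0$. The paper does not prove this proposition at all --- it simply cites it from \cite[Chapter 2]{bertsekas1999} --- so there is nothing to compare beyond noting that your proof is the canonical one the citation points to.
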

\begin{lemma}\label{lem:stationary_NC}
    Assume $\tau>{2}/{\log({(n-1)}/{(n_\text{\emph{max}}-1)})}$. Let $\widehat{\Hb}$ be locally optimal in the UFM problem \eqref{eq:SCL_UFM}. The NC property (\defn\ref{def:NC}) holds at $\widehat{\Hb}$.
\end{lemma}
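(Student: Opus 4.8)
The plan is to argue by contradiction. Suppose $\widehat{\Hb}$ is locally optimal but NC fails, so some class $c$ contains indices $i,j$ with $\widehat{\hb}_i\neq\widehat{\hb}_j$ (in particular $n_c\geq 2$). The whole argument rests on two facts about $\Hb\mapsto\Lc_{\text{SC}}(\Hb)$, which I would record first. Writing $\Gb=\Hb^\top\Hb$, one has $\Lc_{\text{SC}}(\Hb)=\phi(\Gb)$ with $\phi(\Gb)=\frac1n\sum_i\big(\log\sum_{\ell\neq i}e^{G_{i\ell}}-\frac1{n_{y_i}-1}\sum_{j:\,y_j=y_i,\,j\neq i}G_{ij}\big)$; each inner log-term is a log-sum-exp of selected entries of $\Gb$ and the remaining terms are linear, so $\phi$ is \emph{convex} in $\Gb$ — this is the convexity underlying the tight relaxation mentioned above. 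Moreover $\phi$ is invariant under permuting two embeddings of the same class, since such a permutation merely relabels the summation indices.

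Next I would probe $\widehat{\Hb}$ along the segment $\widehat{\Hb}(t):=(1-t)\widehat{\Hb}+t\,\widehat{\Hb}^{\sigma}$, $t\in[0,1]$, where $\sigma$ is the transposition of $i$ and $j$ and $\widehat{\Hb}^{\sigma}$ is $\widehat{\Hb}$ with columns $i$ and $j$ interchanged; since the ball-product feasible set is convex, the whole segment is feasible. Set $g(t):=\Lc_{\text{SC}}(\widehat{\Hb}(t))$. Permutation-invariance of $\phi$ gives $g(1)=g(0)$, and \prop\ref{cond:nec_local} applied at the local minimum $\widehat{\Hb}$ with the feasible direction $\widehat{\Hb}^{\sigma}-\widehat{\Hb}$ gives $g'(0)\geq 0$.

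The crux is to show $g$ is \emph{strictly} convex on $[0,1]$: together with $g'(0)\geq 0$ this forces $g'(t)>0$ on $(0,1]$, hence $g(1)>g(0)$, contradicting $g(1)=g(0)$. Here $\widehat{\Hb}(t)=\widehat{\Hb}-t\,(\widehat{\hb}_i-\widehat{\hb}_j)(\eb_i-\eb_j)^\top$, so $\Gb(t)=\widehat{\Hb}(t)^\top\widehat{\Hb}(t)$ is quadratic in $t$ with $\Gb''(t)=2\|\widehat{\hb}_i-\widehat{\hb}_j\|_2^2\,(\eb_i-\eb_j)(\eb_i-\eb_j)^\top$, and the chain rule gives
\[
  g''(t)=\big\langle \nabla^2\phi(\Gb(t))[\Gb'(t)],\,\Gb'(t)\big\rangle+\big\langle \nabla\phi(\Gb(t)),\,\Gb''(t)\big\rangle .
\]
The first summand is $\geq 0$ by convexity of $\phi$. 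Because $\phi$ does not depend on the diagonal of $\Gb$, the second summand equals $-4\|\widehat{\hb}_i-\widehat{\hb}_j\|_2^2\,\big(\nabla\phi(\Gb(t))\big)_{ij}$, and a short computation gives $\big(\nabla\phi(\Gb(t))\big)_{ij}=\frac1n\big(p_{ij}(t)+p_{ji}(t)-\frac2{n_c-1}\big)$, where $p_{i\ell}(t)=e^{G_{i\ell}(t)}/\sum_{m\neq i}e^{G_{im}(t)}$. Along the segment feasibility is preserved, so all inner products stay in $[-1/\tau,1/\tau]$ and hence $p_{ij}(t)\le e^{2/\tau}/(n-1)$; the hypothesis $\tau>2/\log\frac{n-1}{n_{\text{max}}-1}$ (with $n_c\le n_{\text{max}}$) then gives $p_{ij}(t)+p_{ji}(t)<2/(n_c-1)$, i.e.\ $\big(\nabla\phi(\Gb(t))\big)_{ij}<0$, so $g''(t)>0$ for all $t\in[0,1]$. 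This completes the contradiction and hence the proof.

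The step I expect to be the real obstacle — and the only place the temperature bound is needed — is establishing $\big(\nabla\phi(\Gb(t))\big)_{ij}<0$ \emph{uniformly} over the entire segment, not merely at $t=0$; the strict-convexity argument genuinely requires this, and the stated threshold on $\tau$ is exactly what delivers it. The remaining pieces — convexity and permutation-invariance of $\phi$, the closed form of $\Gb''(t)$, and the formula for $\big(\nabla\phi\big)_{ij}$ — are routine and I would not dwell on them.
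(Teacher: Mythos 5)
Your proof is correct, but it reaches the contradiction by a genuinely different mechanism than the paper. Both arguments perturb $\widehat{\Hb}$ by swapping two same-class columns $i,j$ with $\widehat{\hb}_i\neq\widehat{\hb}_j$, and both use the temperature bound only to establish $P_{ij}+P_{ji}<\tfrac{2}{n_c-1}$ via $P_{ij}\le e^{2/\tau}/(n-1)$. The paper, however, computes the directional derivative $\langle\nabla\Lc_{\text{SC}}(\widehat{\Hb}),\,\widehat{\Hb}^{\sigma}-\widehat{\Hb}\rangle$ explicitly and shows it is strictly negative, splitting it into three pieces: the two ``cross'' sums over $\ell\neq i,j$ are shown to be $\le 0$ by an elementary rearrangement estimate of the form $(a-b)(e^{b}-e^{a})\le 0$, and only the remaining diagonal-direction piece needs the $\tau$ bound. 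You instead use only $g'(0)\ge 0$ from \prop\ref{cond:nec_local} and derive the contradiction from strict convexity of $g$ on $[0,1]$ together with the permutation symmetry $g(1)=g(0)$; the cross terms that the paper bounds by hand are absorbed into the PSD quadratic form $\langle\nabla^2\phi(\Gb(t))[\Gb'(t)],\Gb'(t)\rangle\ge 0$ coming from convexity of $\phi$ in $\Gb$. Your route buys a cleaner treatment of the off-diagonal interactions (no rearrangement computation), at the mild cost of needing the sign of $(\nabla\phi(\Gb(t)))_{ij}+(\nabla\phi(\Gb(t)))_{ji}$ uniformly over the segment, which you correctly obtain because feasibility (hence $|\Gb_{i\ell}(t)|\le 1/\tau$) is preserved along the convex segment. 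Note that the two arguments are logically equivalent in strength: since you only use $g'(0)\ge 0$, your proof also shows (as the paper's does) that every point satisfying the variational inequality \eqref{eq:1storder} is an NC point, not merely every local minimum. The only blemish is an immaterial constant: the second summand of $g''(t)$ is $-2\|\widehat{\hb}_i-\widehat{\hb}_j\|_2^2\big((\nabla\phi)_{ij}+(\nabla\phi)_{ji}\big)$ with $(\nabla\phi)_{ij}=\tfrac{1}{n}(P_{ij}-\tfrac{1}{n_c-1})$, rather than the factor $4$ you wrote; the sign conclusion is unaffected.
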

\lem\ref{lem:stationary_NC} shows all the feasible points that are locally optimal have a low-rank structure and satisfy the collapse property. Thus, similar to the case of CE \cite{mixon2020neural,zhu2021geometric}, it follows that at the global solutions, the embeddings collapse to their class-means. 

The proof of this lemma simply follows by showing that condition \eqref{eq:1storder} does not hold for a feasible $\widehat\Hb$ that does not satisfy the NC property (recall that the feasible set in \eqref{eq:SCL_UFM} is convex). Specifically, suppose $y_1=y_2$ and $\widehat\hb_1\neq \widehat\hb_2$. Then, define $\Hb$ by swapping the first two columns of $\widehat\Hb$, i.e., $\Hb=[\widehat\hb_2,\widehat\hb_1,\widehat\hb_3,\cdots,\widehat\hb_n]$. It is then possible to show that if the condition on $\tau$ holds, $\nabla\Lc_\text{SC}(\Hb)^\top (\Hb-\widehat\Hb) < 0$, which contradicts the necessary condition of local optimality \eqref{eq:1storder}.
}
{The rest of our results in this section, rely on \lem\ref{lem:stationary_NC}. Thus, in what follows, we assume
\begin{align*}
    \tau>{2}/{\log\big(\frac{n-1}{n_\text{{max}}-1}\big)}.
\end{align*}
For balanced data, $n_\text{max}=n/k$ and the condition approximately becomes $\tau>2/\log(k)$ for $n>>k$, which is not restrictive in practical settings.
For brevity, we do not re-state this condition in the next sections.
We finally note that the condition on $\tau$ is a sufficient requirement but not a necessary one. We conjecture that this condition can be relaxed. 
}
{
\subsection{UFM: Landscape and Optimality Conditions}\label{sec:scl_land}
}
{
Here, we study the local and global optimality conditions for the UFM under the SC loss in \eqref{eq:SCL_UFM}. Since the problem is non-convex, it is not obvious whether \eqref{eq:SCL_UFM} has a unique solution or if it has a landscape free from spurious local solutions (which are not globally optimal). In fact, it is easy to see that the global solution is not unique, since rotating the embeddings by an orthonormal matrix $\Rb$, does not change the value of the loss function, i.e., $\Lc_\text{SC}(\Rb^\top\Hb)=\Lc_\text{SC}(\Hb)$. The next theorem characterizes the local and global solutions of \eqref{eq:SCL_UFM}.
}
{
\begin{theorem}[UFM landscape with SC loss]\label{thm:landscape_scl}
    Assume the feature dimension exceeds the number of classes, $d> k$. Then, 
    \\
    (i) all the local solutions of the UFM \eqref{eq:SCL_UFM} are global optima. \\
    (ii) if $\opt{\Hb}_1$ and $\opt{\Hb}_2$ are global minimizers, ${\opt{\Hb}_1}^\top\opt{\Hb}_1$=${\opt{\Hb}_2}^\top\opt{\Hb}_2$.
\end{theorem}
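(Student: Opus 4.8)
The plan is to leverage Lemma~\ref{lem:stationary_NC} to reduce the non-convex UFM \eqref{eq:SCL_UFM} to a convex problem in the Gram matrix of the \emph{class-means}, and then argue optimality transfers back. Since any local minimizer $\widehat{\Hb}$ satisfies NC, it is determined by $k$ distinct column vectors $\mub_1,\dots,\mub_k$ (one per class), each of squared norm at most $1/\tau$. Collecting these into $\Mb=[\mub_1,\dots,\mub_k]_{d\times k}$, the SC loss becomes a function $g$ of $\Gb:=\Mb^\top\Mb \in \R^{k\times k}$ alone: substituting $\hb_{\thetab,i}^\top\hb_{\thetab,j} = \Gb_{y_i,y_j}$ into \eqref{eq:scl_loss} and collapsing the inner sums, one gets an explicit expression in the entries $\Gb_{c,c'}$, weighted by the $n_c$'s. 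The first step is therefore to write this $g(\Gb)$ out and verify it is \emph{convex} on the PSD cone $\{\Gb\succeq 0: \Gb_{c,c}\le 1/\tau\}$ — this is the ``tight convex relaxation'' promised earlier in the Results section. Convexity should follow because the only nonlinearity is $-\log(\sum e^{\,\cdot\,})$-type terms composed with affine maps of $\Gb$; log-sum-exp is convex, and the leading negative sign multiplies exactly the terms that also appear inside, so after simplification the loss is (up to an additive constant) a sum of convex log-sum-exp pieces minus linear terms. I would also argue this relaxed problem has a \emph{unique} minimizer $\opt{\Gb}$: strict convexity on the relevant affine directions, or a symmetrization argument using permutation-invariance within equal-sized classes, pins down $\opt{\Gb}$ uniquely.

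Next I would connect the relaxation to the original problem in both directions. Given $d>k$ and any feasible $\Gb\succeq 0$ of rank $\le k$, we can factor $\Gb=\Mb^\top\Mb$ with $\Mb\in\R^{d\times k}$ and then build an NC-satisfying $\Hb$ by repeating columns, which is feasible for \eqref{eq:SCL_UFM} and attains loss $g(\Gb)$; conversely, \emph{any} feasible $\Hb$ for \eqref{eq:SCL_UFM} — not just NC ones — has $\Lc_\text{SC}(\Hb)\ge$ the relaxed optimum, because one can define class-mean vectors from $\Hb$ and use convexity of $\Lc_\text{SC}$ in $\Hb^\top\Hb$ together with Jensen to push the loss down to some feasible Gram matrix of the means (this is the standard ``averaging within classes only decreases the loss'' step, valid since log-sum-exp is convex). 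Hence $\min \eqref{eq:SCL_UFM} = \min_{\Gb} g(\Gb)$, and the minimum is attained exactly at NC points whose class-mean Gram matrix equals the unique $\opt{\Gb}$. This immediately gives part (ii): if $\opt{\Hb}_1,\opt{\Hb}_2$ are global minimizers they are NC with the same class-mean Gram matrix $\opt{\Gb}$, and since NC forces each column of ${\opt{\Hb}_\ell}^\top\opt{\Hb}_\ell$ to be a copy of an entry of $\opt{\Gb}$, we get ${\opt{\Hb}_1}^\top\opt{\Hb}_1={\opt{\Hb}_2}^\top\opt{\Hb}_2$.

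For part (i), let $\widehat{\Hb}$ be any local minimizer of \eqref{eq:SCL_UFM}. By Lemma~\ref{lem:stationary_NC} it is NC, so it corresponds to a feasible Gram matrix $\widehat{\Gb}=\widehat{\Mb}^\top\widehat{\Mb}$ with $\rank\widehat{\Gb}\le k < d$. The key claim is that $\widehat{\Gb}$ is a \emph{local} minimizer of the convex relaxed problem; since a convex problem has no spurious local minima, $\widehat{\Gb}=\opt{\Gb}$ and $\widehat{\Hb}$ is globally optimal. To prove the claim, I would show that local perturbations of $\widehat{\Gb}$ within the feasible PSD set can be realized by local perturbations of $\widehat{\Hb}$ of comparable size, so a descent direction for $g$ at $\widehat{\Gb}$ would lift to a descent direction for $\Lc_\text{SC}$ at $\widehat{\Hb}$, contradicting local optimality. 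This is exactly where the strict inequality $d>k$ is used: because $\widehat{\Gb}$ has rank at most $k<d$, the factorization $\Gb\mapsto\Mb$ has a ``slack'' direction, and small symmetric perturbations $\widehat{\Gb}+\Delta$ that stay PSD can be written as $(\widehat{\Mb}+E)^\top(\widehat{\Mb}+E)$ with $\|E\|=O(\|\Delta\|)$ — a standard fact about smooth local liftings of the map $\Mb\mapsto\Mb^\top\Mb$ at rank-deficient points (it fails when $d=k$ and $\widehat{\Mb}$ is singular, which is why the hypothesis matters).

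I expect the main obstacle to be precisely this lifting step in part (i): making rigorous that every feasible PSD perturbation of the low-rank $\widehat{\Gb}$ is hit by a feasible perturbation of $\widehat{\Hb}$ with controlled norm, including handling the norm constraints $\|\hb_i\|^2\le 1/\tau$ versus $\Gb_{c,c}\le 1/\tau$ and the fact that we only need a local, not global, statement. A clean way is to use the inverse/implicit function theorem on the map $\Mb\mapsto(\Mb^\top\Mb,\ \text{something fixing the rotational gauge})$ restricted to a $(dk - \binom{k}{2})$-dimensional slice transverse to the $O(d)$-orbit; since $d>k$ this slice surjects locally onto a neighborhood of $\widehat{\Gb}$ in the set of rank-$\le k$ PSD matrices, and one checks the image is in fact a neighborhood in the full feasible cone because any nearby PSD matrix of rank $>k$ can be handled by noting $g$ extended to all of $\{\Gb\succeq0\}$ still has its min at the rank-$k$ point $\opt{\Gb}$. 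The convexity computation for $g$ and the within-class Jensen argument are routine by comparison; the geometry of the non-convex-to-convex correspondence at rank-deficient points is the crux.
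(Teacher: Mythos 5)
Your high-level architecture (pass to a convex program in a Gram matrix, transfer local optimality, invoke uniqueness of the convex minimizer) is the same as the paper's, but the execution diverges in two ways that matter. First, the paper relaxes to the full $n\times n$ Gram $\Gb=\Hb^\top\Hb$ (problem \eqref{eq:SCL_UFM_cvx}), not the $k\times k$ class-mean Gram. That makes the ``relaxation'' direction trivial --- every feasible $\Hb$ yields a feasible $\Gb$ with the same loss, so no within-class Jensen/averaging step is needed; your $k\times k$ reduction forces you to prove that step, and it is more delicate than ``routine'' (the permutation-averaged Gram has within-class off-diagonal entries that differ from the squared norms of the class means, so you additionally need monotonicity of the reduced objective in $\Gammab_{c,c}$). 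Second, and more importantly, the paper's local-to-global transfer involves no lifting of perturbations at all: it takes the first- and second-order necessary conditions at $\widehat\Hb$, uses $\rank\widehat\Hb\le k<d$ to produce $\vb$ with $\widehat\Hb^\top\vb=0$, and shows that if $\Bb:=\nabla_\Gb\Lc_\text{cvx}(\widehat\Hb^\top\widehat\Hb)+\diag(\widehat\lambdab)$ had a negative eigenvector $\ub$, then $\vb\ub^\top$ would be a critical-cone direction of negative curvature for the Lagrangian --- a contradiction. Hence $\Bb\succeq0$ and $(\widehat\Hb^\top\widehat\Hb,\widehat\lambdab,\Bb)$ is a KKT point of the convex program, which is globally optimal by strong duality. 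This is where $d>k$ is used in the paper.

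The genuine gap in your version is the step you yourself flag as the crux. Your transfer needs every feasible perturbation of $\widehat\Gb$ toward a better point to be realized by $(\widehat\Mb+\mathbf{E})^\top(\widehat\Mb+\mathbf{E})$ with $\mathbf{E}$ small, and the implicit function theorem does not deliver this at exactly the points where you need it: if $\widehat\Mb$ is column-rank-deficient (which you cannot exclude a priori, even when $d>k$), the differential $\mathbf{E}\mapsto\widehat\Mb^\top\mathbf{E}+\mathbf{E}^\top\widehat\Mb$ annihilates the symmetric block supported on $\ker\widehat\Mb$ and so is not surjective; there is no smooth local section, and the best available regularity is H\"older-$1/2$ (as for the matrix square root). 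The conclusion you want is still true --- for $d\ge k$ the factorization map induces a homeomorphism $\R^{d\times k}/O(d)\to\{k\times k\ \text{PSD}\}$, so $\Gb_t\to\widehat\Gb$ along the segment to a better feasible point admits lifts $\Mb_t$ with $\min_{\Ub\in O(d)}\|\Ub\Mb_t-\widehat\Mb\|\to0$ --- but that is an orthogonal-Procrustes continuity argument you have not supplied, not an IFT argument. Either prove that lemma or adopt the paper's second-order certificate, which sidesteps lifting entirely. Finally, for part (ii), note that uniqueness of the relaxed minimizer is not automatic from ``strict convexity'': the reduced objective is constant along $\Gammab\mapsto\Gammab+t\ones_k\ones_k^\top$, so one must prove strict convexity only transverse to that direction and separately rule it out at the optimum, as the paper does in Lemmas \ref{lem:unique_main} and \ref{lem:strict_cvx}.
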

This theorem proves that although the global optimizer of the non-convex problem \eqref{eq:SCL_UFM} is not unique, it has a unique \emph{implicit geometry}. Furthermore, it ensures that the landscape of this problem has no non-global local solutions. We will next discuss the proof sketch.
}

\subsection{Proof Sketch}
{
The main idea in the proof is to relate the KKT points of the problem \eqref{eq:SCL_UFM} to the KKT points of its convex relaxation. We start by formalizing the convex relaxation.
}

\vspace{3pt}
{
\noindent\textbf{Convex relaxation.}~To define the convex relaxation, we note that the SC loss is in fact a convex function of the Gram matrix $\Gb:=\Hb^\top\Hb$.
\begin{align}
        \Lc_\text{SC}(\Hb)=\widehat\Lc(\Gb) &:= {\frac{1}{n}}\sum_{i\in[n]}\frac{-1}{n_{y_i}-1}\sum_{\substack{j \neq i\\ y_j =y_i}}\log\bigg(\frac{e^{\Gb_{i,j}}}{\sum\limits_{\ell \neq i}e^{\Gb_{i,\ell}}}\bigg).\nn
\end{align}
To make the convex loss function symmetric, we define $\Lc_{\text{cvx}}
    (\Gb):=\frac{1}{2}\Big(\widehat\Lc(\Gb) + \widehat\Lc(\Gb^\top)\Big)$ and we formulate the relaxed convex problem with the symmetrized objective function as follows.
\begin{align}\label{eq:SCL_UFM_cvx}
    \opt{\Gb} \in \arg\min_{\Gb} \Lc_{\text{cvx}}
    (\Gb),~\text{{subj. to}}~~\Gb\succcurlyeq0,~\Gb_{i,i} \leq \frac{1}{\tau},\, \forall i\in[n].
\end{align}
This convex program satisfies the Slater's conditions. Hence, strong duality holds and any KKT point of \eqref{eq:SCL_UFM_cvx} specifies a global solution. 
}

\vspace{3pt}
\noindent\textbf{Proof of \thm\ref{thm:landscape_scl}.}~{Suppose $\widehat{\Hb}$ is a local optima. Thus, it satisfies the first order and second order necessary condition for local optimality for constrained problems \cite[Theorems 12.1 and 12.5]{nocedal1999numerical}. 
In particular, consider the Lagrangian function $$\text{Lag}(\Hb,\lambdab) = \Lc_{\text{SC}}
    (\Hb) + \sum_{i\in[n]} \lambdab_i (\hb_i^\top\hb_i-{1}/{\tau}),$$ where $\lambdab\in\R^n$.
By the first-order conditions, there exists a dual variable $\widehat{\lambdab}\in\R^n$ such that $(\widehat{\Hb},\widehat{\lambdab})$ are the KKT points of \eqref{eq:SCL_UFM}. Specifically, $\nabla_\Hb \text{Lag}(\widehat\Hb,\widehat\lambdab)=0$, i.e.,
\begin{align*}
    2\widehat\Hb(\nabla_\Gb\Lc_\text{cvx}(\widehat\Hb^\top\widehat\Hb) +\diag(\widehat\lambdab))=0 
\end{align*}
By the second-order conditions, the Hessian of the Lagrangian 
    should have positive curvature in every direction $\Deltab_{d\times n} = [\db_1,...,\db_n]$ such that
\begin{align*}
    \db_i^\top\widehat\hb_i = 0,\quad \text{if } \norm{\widehat\hb_i}_2^2 = \frac{1}{\tau}, \, \widehat\lambda_i>0,\\
    \db_i^\top\widehat\hb_i \geq 0,\quad \text{if } \norm{\widehat\hb_i}_2^2 = \frac{1}{\tau}, \, \widehat\lambda_i=0,
\end{align*}

Suppose $\Bb:=\nabla_\Gb\Lc_\text{cvx}(\widehat\Hb^\top\widehat\Hb) +\diag(\widehat\lambdab)$ has a negative eigen-value, i.e., there exists $\ub\in\R^n$ such that $\ub^\top\Bb\ub<0$.
On the other hand, by \lem\ref{lem:stationary_NC}, the NC property holds at $\widehat\Hb$ and $\rank(\widehat\Hb)\leq k<d$. This implies that for some $\vb\in\R^d$, $\widehat\Hb^\top\vb = 0$. As a result, it is possible to show that The Hessian of the Lagrangian $\nabla^2_\Hb \text{Lag}(\widehat\Hb,\widehat\lambdab)$ has a negative curvature in the direction $\Deltab:=\vb\ub^\top$. 
This contradicts the second order condition of local optimality \cite[Theorems 12.5]{nocedal1999numerical} since $\Deltab^\top\widehat\Hb = 0$, i.e., $\db_i^\top\widehat\hb_i=0,\,\forall i\in[n]$.

Therefore, it is necessary for $(\widehat\Hb, \widehat{\lambdab})$ to satisfy,
$$
\nabla\Lc_\text{cvx}(\widehat{\Hb}^\top\widehat{\Hb})+\diag\big(\widehat\lambdab\big) \succcurlyeq 0.
$$
From here, it is straightforward to prove $(\widehat{\Gb}:=\widehat{\Hb}^\top\widehat{\Hb}, \widehat{\lambdab}, \Bb)$ is a KKT point of the convex problem \eqref{eq:SCL_UFM_cvx}. Equivalently, by strong duality, $\widehat{\Gb}$ is a global optimizer. On the other hand, since \eqref{eq:SCL_UFM_cvx} is a relaxation of the original UFM \eqref{eq:SCL_UFM}, for the optimal solution $\Hbopt$, we have,
\begin{equation}
    \Lc_\text{SC}(\widehat\Hb) = \Lc_\text{cvx}(\widehat\Gb) \leq \Lc_\text{cvx}({\Hbopt}^\top\Hbopt) = \Lc_\text{SC}(\Hbopt).
\end{equation}
Therefore, $\widehat\Hb$ is a global optimum of \eqref{eq:SCL_UFM}, and the proof is complete. }
%
This further shows the relaxation in \eqref{eq:SCL_UFM_cvx} is tight.
\begin{corollary}\label{cor:tight_scl}
    Suppose $d>k$. Then, the convex program \eqref{eq:SCL_UFM_cvx} is a tight relaxation of \eqref{eq:SCL_UFM}.
\end{corollary}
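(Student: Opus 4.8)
The plan is to read Corollary~\ref{cor:tight_scl} off the machinery already assembled for Theorem~\ref{thm:landscape_scl}. Here ``tight'' means that the convex program \eqref{eq:SCL_UFM_cvx} and the UFM \eqref{eq:SCL_UFM} share the same optimal value (equivalently, some minimizer of \eqref{eq:SCL_UFM_cvx} is the Gram matrix of a feasible point of \eqref{eq:SCL_UFM}). One inequality is immediate and does not need $d>k$: every $\Hb$ feasible for \eqref{eq:SCL_UFM} yields $\Gb:=\Hb^\top\Hb\succcurlyeq 0$ with $\Gb_{i,i}=\|\hb_i\|_2^2\le 1/\tau$, so $\Gb$ is feasible for \eqref{eq:SCL_UFM_cvx}, and since $\Gb$ is symmetric, $\Lc_\text{cvx}(\Gb)=\widehat\Lc(\Gb)=\Lc_\text{SC}(\Hb)$. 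Hence the optimal value of \eqref{eq:SCL_UFM_cvx} is at most that of \eqref{eq:SCL_UFM}.

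For the reverse inequality I would first note that the feasible set of \eqref{eq:SCL_UFM} is compact (a product of closed balls of radius $1/\sqrt{\tau}$) and $\Lc_\text{SC}$ is continuous on it, so a global minimizer $\Hbopt$ of \eqref{eq:SCL_UFM} exists; in particular it is a local minimizer, and since $d>k$ it is covered by the argument inside the proof of Theorem~\ref{thm:landscape_scl}. That argument shows: Lemma~\ref{lem:stationary_NC} forces the NC property at $\Hbopt$, hence $\rank(\Hbopt)\le k<d$; the first- and second-order necessary conditions then force $\nabla_\Gb\Lc_\text{cvx}({\Hbopt}^\top\Hbopt)+\diag(\widehat\lambdab)\succcurlyeq 0$ for the associated multipliers $\widehat\lambdab$; and therefore $\big(\opt{\Gb},\widehat\lambdab,\Bb\big)$ with $\opt{\Gb}:={\Hbopt}^\top\Hbopt$ is a KKT triple of \eqref{eq:SCL_UFM_cvx}. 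Since \eqref{eq:SCL_UFM_cvx} satisfies Slater's condition, strong duality certifies $\opt{\Gb}$ as a global minimizer of \eqref{eq:SCL_UFM_cvx}. Consequently the optimal value of \eqref{eq:SCL_UFM_cvx} equals $\Lc_\text{cvx}(\opt{\Gb})=\Lc_\text{SC}(\Hbopt)$, i.e., the optimal value of \eqref{eq:SCL_UFM}; combined with the first inequality this is exactly the claimed tightness.

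To state the conclusion in its sharpest form I would add the factorization remark: by Lemma~\ref{lem:stationary_NC} and Theorem~\ref{thm:landscape_scl}(ii) every global minimizer of \eqref{eq:SCL_UFM} has the \emph{same} Gram matrix $\opt{\Gb}$, with $\rank(\opt{\Gb})\le k<d$, so $\opt{\Gb}=\Hb^\top\Hb$ for some $\Hb\in\R^{d\times n}$; any such $\Hb$ is feasible for \eqref{eq:SCL_UFM} and attains the convex optimum, so the relaxation has no gap and its optimum is realized by a genuine UFM solution. I do not anticipate a real obstacle, since all the content is reused from the proof of Theorem~\ref{thm:landscape_scl}. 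The only points needing a line of care are (a) the existence of a UFM minimizer (compactness of the feasible set plus continuity of $\Lc_\text{SC}$), so that the theorem's reasoning applies to it, and (b) being precise about the meaning of ``tight'', since $\Lc_\text{cvx}$ is not strictly convex and \eqref{eq:SCL_UFM_cvx} could in principle possess further optimizers of rank exceeding $d$ — these are harmless for tightness of the optimal value but would prevent a naive ``every convex optimum factors'' claim.
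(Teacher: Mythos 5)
Your proposal is correct and follows essentially the same route as the paper: the paper obtains \cor\ref{cor:tight_scl} directly from the proof of \thm\ref{thm:landscape_scl}, where the chain $\Lc_\text{SC}(\widehat\Hb)=\Lc_\text{cvx}(\widehat\Gb)\leq\Lc_\text{cvx}({\Hbopt}^\top\Hbopt)=\Lc_\text{SC}(\Hbopt)$ together with the global optimality of $\widehat\Gb$ in \eqref{eq:SCL_UFM_cvx} already equates the two optimal values. Your added care about existence of a UFM minimizer (compactness plus continuity) and the easy direction of the inequality are sensible explicit versions of what the paper leaves implicit, and your concern in point (b) is settled by \lem\ref{lem:unique_main}, which gives uniqueness of the convex optimizer.
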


We can further show that the optimal solution of \eqref{eq:SCL_UFM_cvx} is unique. We use this useful result in the next section.
\begin{lemma}\label{lem:unique_main}
    The convex UFM problem \eqref{eq:SCL_UFM_cvx} has a unique minimizer.
\end{lemma}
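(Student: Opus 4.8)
The plan is to exploit strict convexity of the objective $\Lc_{\text{cvx}}$ on the affine subspace spanned by the constraints, which is the standard route to uniqueness for convex programs. First I would note that $\Lc_{\text{cvx}}(\Gb)$ depends on $\Gb$ only through the off-diagonal entries $\Gb_{i,j}$ with $i\neq j$ (the diagonal entries never appear inside the log-softmax, since the sums in \eqref{eq:SCL_UFM_cvx} run over $\ell\neq i$ and $j\neq i$). Hence I cannot hope for strict convexity in $\Gb$ over all of $\R^{n\times n}$; instead, suppose for contradiction that $\opt{\Gb}_1$ and $\opt{\Gb}_2$ are two distinct minimizers. By convexity of the feasible set and of $\Lc_{\text{cvx}}$, the whole segment $\Gb_t = (1-t)\opt{\Gb}_1 + t\opt{\Gb}_2$, $t\in[0,1]$, consists of minimizers, and $\Lc_{\text{cvx}}$ is \emph{affine} (constant) along this segment. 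I would then show this forces $\opt{\Gb}_1$ and $\opt{\Gb}_2$ to agree on all off-diagonal entries.

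The key step is to establish that $\widehat\Lc(\Gb)$ — equivalently each summand $-\log\big(e^{\Gb_{i,j}}/\sum_{\ell\neq i}e^{\Gb_{i,\ell}}\big)$ — is convex in the off-diagonal row vector $(\Gb_{i,\ell})_{\ell\neq i}$, and strictly convex modulo the directions in which the log-softmax is flat. Concretely, $-\log\mathrm{softmax}$ of a vector $\zb$ is convex and its only flat directions are the all-ones shifts $\zb\mapsto\zb+c\ones$; summing over $j\neq i$ with $y_j=y_i$ (there is at least one such $j$ since $n_{y_i}\geq 2$ under our running assumption on $\tau$) removes even that degeneracy because the ``positive'' index $j$ varies. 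So along the segment $\Gb_t$, constancy of the loss forces, for every row $i$, the difference $(\opt{\Gb}_1)_{i,\ell} - (\opt{\Gb}_2)_{i,\ell}$ to be a constant (in $\ell\neq i$) vector; combined with the removal of the flat direction this constant must be zero. Therefore $\opt{\Gb}_1$ and $\opt{\Gb}_2$ coincide off the diagonal.

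It remains to pin down the diagonal. Since $\opt{\Gb}_1$ and $\opt{\Gb}_2$ are both feasible PSD matrices sharing the same off-diagonal part, and both are optimal, I would argue the diagonal is also forced — the cleanest way is to invoke the KKT/complementary-slackness structure: at an optimum the gradient $\nabla_\Gb\Lc_{\text{cvx}}(\opt{\Gb}) + \diag(\opt{\lambdab})$ equals the PSD dual slack $\Bb\succeq 0$ with $\Bb\opt{\Gb}=0$, and $\opt{\lambda}_i(\opt{\Gb}_{i,i}-1/\tau)=0$. Because the gradient term depends only on the (now common) off-diagonal entries, the dual variables $\opt{\lambdab}$ and the dual slack $\Bb$ are the same for both solutions; then $\Bb\,\opt{\Gb}_1 = \Bb\,\opt{\Gb}_2 = 0$ together with $\opt{\Gb}_1-\opt{\Gb}_2$ being a diagonal matrix $\Db$ gives $\Bb\Db = 0$, forcing $\Db$ to be supported on $\ker\Bb$; a short argument using PSD-ness of $\opt{\Gb}_1,\opt{\Gb}_2$ and complementary slackness on the trace constraints then yields $\Db=0$.

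The main obstacle I anticipate is the last paragraph: handling the diagonal entries, which are genuinely invisible to the objective, requires care and must lean on the PSD cone geometry and complementary slackness rather than on strict convexity of the loss. An alternative, possibly cleaner, route is to lift back to $\Hb$: since $d>k$, Corollary~\ref{cor:tight_scl} and Theorem~\ref{thm:landscape_scl}(ii) already give that all $\Hb$-minimizers have a common Gram matrix $\opt{\Hb}^\top\opt{\Hb}$, and every feasible $\Gb$ of rank $\le d$ is realizable as such a Gram matrix; one then needs only to rule out high-rank minimizers of the convex program, which follows because the objective is rank-insensitive while the PSD/trace constraints make a minimum-rank (indeed rank $\le k$, by the NC structure inherited through the relaxation) representative always available. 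I would present the strict-convexity argument as the main proof and mention the lifting argument as a remark.
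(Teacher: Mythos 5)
Your overall strategy is the same as the paper's: the paper proves a ``strict convexity modulo flat directions'' statement for $\Lc_{\text{cvx}}$ (its Lemma~\ref{lem:strict_cvx}) and then combines it with a separate argument excluding the flat direction, exactly the route you outline. The problem is that your key step for killing the flat direction is wrong. You claim the all-ones shift degeneracy of $-\log\mathrm{softmax}$ is ``removed because the positive index $j$ varies'' when summing over $j\neq i$ with $y_j=y_i$. It is not: each term $\log\big(\sum_{\ell\neq i}e^{\Gb_{i,\ell}-\Gb_{i,j}}\big)$ is invariant under adding a common constant to the off-diagonal entries of row $i$ \emph{regardless} of which index $j$ plays the role of the positive, so the sum over $j$ inherits exactly the same invariance. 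Hence $\widehat\Lc$ is flat along every per-row constant shift of the off-diagonals, and after symmetrization $\Lc_{\text{cvx}}$ remains exactly flat along $\ones_n\ones_n^\top-\Id_n$ (and along arbitrary diagonal perturbations). Your argument therefore only yields that two minimizers have off-diagonal parts differing by a global constant $\beta$; it does not force $\beta=0$. Closing this requires an extra ingredient involving the constraints or an explicit value comparison; the paper does it by showing that any $\Gb$ whose off-diagonal is constant is strictly suboptimal, by comparing against a feasible block-diagonal Gram matrix.

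On the diagonal entries you are right --- and in fact more careful than the paper's written proof, which is silent on this point --- that the objective is blind to $\Gb_{i,i}$ and uniqueness there must come from the PSD and trace constraints. But your KKT sketch is not yet a proof: you assert that the multipliers $\lambdab$ and the dual slack $\Bb$ coincide for the two solutions because the gradient depends only on the (common) off-diagonal part, yet the $\lambda_i$ are determined jointly with $\Bb$ through $\Bb\succeq 0$ and $\Bb\opt{\Gb}=0$, and a priori the two optima could have different active sets. A complete argument would, for instance, show that the kernel of an optimal $\opt{\Gb}$ contains, for every $i$, a vector with nonzero $i$-th coordinate (the collapsed block structure supplies such vectors of the form $\eb_i-\eb_j$ with $y_i=y_j$), so that lowering any $\Gb_{i,i}$ destroys positive semidefiniteness while raising it violates $\Gb_{i,i}\le 1/\tau$. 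As written, both the flat-direction step and the diagonal step need repair before the proof is complete.
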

For the proof of this lemma, we note two facts: 1) If $\Gb - \diag(\Gb) \propto \ones_n\ones_n^\top - \Id_n$, $\Gb$ is not a global optima, and 2) the objective $\Lc_\text{cvx}(\cdot)$ is strictly convex in any direction $\Vb$ such that
$
    \Vb - \diag(\Vb) \not\propto \ones_n\ones_n^\top - \Id_n.
    $

{
\subsection{Global Solutions}\label{sec:global_scl}}
{
By \lem\ref{lem:stationary_NC}, we know that the global solution of the UFM \eqref{eq:SCL_UFM}, as a first order stationary point, has the NC property. That is, there exists $\opt\Mb_{d\times k}=[\opt\mub_1,...\opt\mub_k]$ such that $\opt\hb_i=\opt\mub_{y_i}$. Thus, to find the conditions of global optimality, we only need to find the $k$ optimal mean-embedding vectors $\opt\mub_c,\,c\in[k]$. 
}
To analyze the global solutions, we restrict our attention to a specific training data distribution defined as follows.

\begin{definition}[$(R,\rho)$-STEP imbalanced data] \label{def:step_data} With $R\geq 1$ as the imbalance ratio $\rho\in(0,1)$ as the minority fraction, the following holds: classes $c\in\{1,\ldots,\rhobar k\}$ are majority classes, all with the same sample size of $n_c=n_\text{maj}$ and classes $c\in\{\rhobar k + 1,\cdots, k\}$ are minority classes, all with the same sample size of $n_c=n_\text{minor}$, where $\nmaj=R\nmin$.

\end{definition}

{
%
Given the NC property at the optimal solution, in the $(R,\rho)$-STEP imbalanced setting, we can use the symmetry in the size of the classes to even further simplify the structure of the global solution.
The following lemma formalizes this argument.
\begin{lemma} \label{lem:scl_symm_global}  
    Assume an $(R,\rho)$-STEP imbalanced setting with $k$ classes. Let $\Hbopt$ be the global solution of \eqref{eq:SCL_UFM}. Then $\opt{\Gb}={\Hbopt}^\top\Hbopt$ is the unique optimizer of the convex program \eqref{eq:SCL_UFM_cvx}. Furthermore, $\opt{\Gb}$ has a block structure: there exists scalars $\alpha_\text{\emph{maj/minor}},\,\beta_\text{\emph{maj/minor}}, \,\theta \in[{-1}/{\tau},{1}/{\tau}]$ such that, 
    $\opt{\Gb} = \begin{bmatrix}
            \opt{\Gb}_{1,1} & \opt{\Gb}_{1,2}\\
            \opt{\Gb}_{2,1} & \opt{\Gb}_{2,2}
        \end{bmatrix},$ and 
    \begin{align*}
        \opt{\Gb}_{1,1} &= \bigg((\alpha_\emph{maj} - \beta_\emph{maj})\Id_{\rhobar k} + \beta_\emph{maj} \ones_{\rhobar k}\ones_{\rhobar k}^\top\bigg)\otimes \ones_{n_\emph{maj}} \ones_{n_\emph{maj}}^\top\\
        \opt{\Gb}_{1,2} &= \theta \ones_{\rhobar k n_\emph{maj}} \ones_{\rho k n_\emph{minor}}^\top \\
        \opt{\Gb}_{2,1} &= \theta \ones_{\rho k n_\emph{minor}} \ones_{\rhobar k n_\emph{maj}}^\top \\
        \opt{\Gb}_{2,2} &= \bigg((\alpha_\emph{minor} - \beta_\emph{minor})\Id_{\rho k} + \beta_\emph{minor} \ones_{\rho k}\ones_{\rho k}^\top\bigg)\otimes \ones_{n_\emph{minor}} \ones_{n_\emph{minor}}^\top
    \end{align*}
    %
\end{lemma}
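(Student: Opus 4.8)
The plan is to exploit the permutation symmetry of the $(R,\rho)$-STEP objective together with the uniqueness of the convex minimizer. First, Theorem~\ref{thm:landscape_scl} and Corollary~\ref{cor:tight_scl} give that $\opt{\Gb}={\Hbopt}^\top\Hbopt$ is a global optimizer of the tight convex relaxation \eqref{eq:SCL_UFM_cvx}, and Lemma~\ref{lem:unique_main} says this optimizer is unique; so it suffices to show that the \emph{unique} minimizer of \eqref{eq:SCL_UFM_cvx} has the claimed block form. I would do this by a symmetrization argument: identify a group $\mathcal{G}$ of permutations of the $n$ indices under which both the objective $\Lc_\text{cvx}$ and the feasible set of \eqref{eq:SCL_UFM_cvx} are invariant, argue that the unique minimizer must itself be $\mathcal{G}$-invariant, and then show that any $\mathcal{G}$-invariant PSD matrix with equal diagonal has exactly the stated parametrization in terms of $\alpha_\text{maj},\beta_\text{maj},\alpha_\text{minor},\beta_\text{minor},\theta$.

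Concretely, the relevant group is $\mathcal{G} = (S_{n_\text{maj}}\wr S_{\rhobar k}) \times (S_{n_\text{minor}}\wr S_{\rho k})$: we may freely permute the $n_\text{maj}$ samples within any majority class, permute the $\rhobar k$ majority classes among themselves (since they all have the same size), and symmetrically for the minority block. For a permutation matrix $\Pb$ realizing any $\sigma\in\mathcal{G}$, one checks directly from the formula for $\widehat\Lc$ (hence $\Lc_\text{cvx}$) that $\Lc_\text{cvx}(\Pb^\top\Gb\Pb)=\Lc_\text{cvx}(\Gb)$, because $\sigma$ permutes summands $i$ within their class and permutes the inner index set $\{j\neq i: y_j=y_i\}$ and the normalization-sum index set $\{\ell\neq i\}$ consistently; the constraint set $\{\Gb\succcurlyeq 0,\ \Gb_{i,i}\le 1/\tau\}$ is also clearly $\mathcal{G}$-invariant. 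If $\opt\Gb$ is the unique minimizer, then $\Pb^\top\opt\Gb\Pb$ is also a minimizer, so $\Pb^\top\opt\Gb\Pb=\opt\Gb$ for every such $\Pb$; that is, $\opt\Gb$ lies in the commutant of the representation of $\mathcal{G}$ by permutation matrices. Decomposing the index set into the two orbits (majority/minority samples) and within each orbit into blocks of size $n_\text{maj}$ (resp.\ $n_\text{minor}$) indexed by class, a standard computation of this commutant shows every $\mathcal{G}$-fixed matrix is constant on the following index-pair classes: (a) same sample (diagonal), (b) distinct samples in the same majority class, (c) samples in distinct majority classes, (d)--(f) the analogous minority cases, (g) one majority and one minority sample. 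Folding the diagonal value into $\alpha_\text{maj}$ on the diagonal blocks (we may use the convention $\opt\Gb_{i,i}=\alpha_\text{maj}$ for $i$ majority, since the within-class off-diagonal constant is $\alpha_\text{maj}$ too — indeed NC forces $\opt\hb_i=\opt\hb_j$ within a class, so the whole $n_\text{maj}\times n_\text{maj}$ within-class block equals $\alpha_\text{maj}\ones\ones^\top$), and writing $\beta_\text{maj}$ for the across-majority-class value, $\theta$ for the majority--minority value, and $\alpha_\text{minor},\beta_\text{minor}$ for the minority analogues, yields exactly the displayed Kronecker-product expressions for $\opt\Gb_{1,1},\opt\Gb_{1,2},\opt\Gb_{2,1},\opt\Gb_{2,2}$. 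The bound $\alpha_{\cdot},\beta_{\cdot},\theta\in[-1/\tau,1/\tau]$ follows since $\opt\Gb\succcurlyeq0$ with $\opt\Gb_{i,i}\le 1/\tau$ implies $|\opt\Gb_{i,j}|\le\sqrt{\opt\Gb_{i,i}\opt\Gb_{j,j}}\le 1/\tau$ by PSD-ness.

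The main obstacle, and the step requiring care, is the invariance-implies-fixed-point argument: it hinges on \emph{uniqueness} of the convex minimizer (Lemma~\ref{lem:unique_main}), which is exactly why that lemma was proved first — without it, one could only conclude that the \emph{set} of minimizers is $\mathcal{G}$-invariant, not that a particular minimizer is $\mathcal{G}$-fixed. A secondary subtlety is verifying the objective invariance cleanly: one must check that the per-sample weight $1/(n_{y_i}-1)$ and the outer factor $1/n$ are untouched (true, since $\mathcal{G}$ preserves class sizes), and that the symmetrization $\Lc_\text{cvx}=\tfrac12(\widehat\Lc(\Gb)+\widehat\Lc(\Gb^\top))$ does not interfere — it does not, since $\mathcal{G}$ acts by conjugation $\Gb\mapsto\Pb^\top\Gb\Pb$, which commutes with transposition. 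Finally, identifying the commutant concretely is routine linear algebra (the commutant of a Young-type permutation representation), so I would state it with a one-line justification rather than a full derivation.
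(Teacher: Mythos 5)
Your proposal is correct and follows essentially the same route as the paper: both arguments combine the NC property (Lemma~\ref{lem:stationary_NC}), the uniqueness of the convex minimizer (Lemma~\ref{lem:unique_main}), and invariance of \eqref{eq:SCL_UFM_cvx} under permutations that preserve class sizes to force the block structure. The paper merely phrases the symmetry step as pairwise swaps of equal-sized classes acting on the reduced $k\times k$ matrix $\Gammab$ (Lemma~\ref{lem:symm_main}), whereas you work with the full wreath-product group on the $n\times n$ Gram matrix and its commutant --- a presentational difference, not a different proof.
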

This lemma follows from a simple symmetry argument that uses the fact that for two classes with equal sizes, their corresponding (mean-)embeddings are weighted similarly in the objective of problem \eqref{eq:SCL_UFM_cvx}:
\begin{lemma}\label{lem:symm_main}
    Consider problem \eqref{eq:SCL_UFM_cvx}. If $n_{c_1}=n_{c_2}$ for $c_1\neq c_2\in[k]$, then  $\forall i,j,\ell\in [n]$, such that $y_i=c_1,\,y_j=c_2$, and $y_\ell\neq c_1,c_2$, $\opt\Gb_{i,\ell} = \opt\Gb_{j,\ell}$.
\end{lemma}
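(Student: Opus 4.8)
\textbf{Proof proposal for Lemma \ref{lem:symm_main}.}
The plan is to exploit the uniqueness of the minimizer of the convex program \eqref{eq:SCL_UFM_cvx} (established in Lemma \ref{lem:unique_main}) together with an averaging/symmetrization argument. Fix two classes $c_1 \neq c_2$ with $n_{c_1} = n_{c_2} =: m$. Let $\pi$ be the permutation of $[n]$ that swaps, index-for-index, the block of samples with label $c_1$ and the block of samples with label $c_2$, and fixes all other indices. Let $\Pibf$ be the associated $n\times n$ permutation matrix. The key observation is that the objective $\Lc_\text{cvx}$ is invariant under the action $\Gb \mapsto \Pibf^\top \Gb \Pibf$: indeed $\widehat\Lc(\Gb)$ depends on $\Gb$ only through, for each $i$, the entries $\{\Gb_{i,\ell}\}_{\ell\neq i}$ aggregated in the softmax denominator and the entries $\{\Gb_{i,j}\}_{j\neq i, y_j=y_i}$ in the numerator, and the per-sample weight $1/(n_{y_i}-1)$; since $c_1$ and $c_2$ have the same size, relabelling a $c_1$-sample as the corresponding $c_2$-sample (and vice versa) merely permutes the summands and leaves every weight unchanged. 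The same invariance then holds for the symmetrized loss $\Lc_\text{cvx}$. Also, the feasible set $\{\Gb \succcurlyeq 0,\ \Gb_{i,i}\le 1/\tau\}$ is invariant under $\Gb \mapsto \Pibf^\top\Gb\Pibf$.

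Given this, if $\opt\Gb$ is the (unique) minimizer, then $\Pibf^\top \opt\Gb \Pibf$ is also a minimizer, hence by Lemma \ref{lem:unique_main} we must have $\Pibf^\top \opt\Gb \Pibf = \opt\Gb$, i.e. $\opt\Gb$ is $\Pibf$-invariant. Now take any $i$ with $y_i = c_1$ and any $\ell$ with $y_\ell \notin \{c_1,c_2\}$. Let $j = \pi(i)$, which by construction satisfies $y_j = c_2$, while $\pi(\ell) = \ell$. The $\Pibf$-invariance $\opt\Gb_{\pi(a),\pi(b)} = \opt\Gb_{a,b}$ applied with $a=i$, $b=\ell$ gives $\opt\Gb_{j,\ell} = \opt\Gb_{i,\ell}$. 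Since $\pi$ restricted to the $c_1$-block is a bijection onto the $c_2$-block, every pair $(i,j)$ with $y_i = c_1$, $y_j = c_2$ arises this way (or one reduces to it by composing with an intra-block permutation, which by the same argument also fixes $\opt\Gb$), so the claim $\opt\Gb_{i,\ell} = \opt\Gb_{j,\ell}$ holds for all such $i,j,\ell$.

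The only mildly delicate point — and the step I would spend the most care on — is verifying the invariance $\Lc_\text{cvx}(\Pibf^\top\Gb\Pibf) = \Lc_\text{cvx}(\Gb)$ cleanly, in particular making sure the exclusion $\ell \neq i$ in the denominator and the label-matching constraint $y_j = y_i$ in the numerator transform correctly under $\pi$ (they do, because $\pi$ is an involution that maps the label classes onto each other bijectively), and that the normalizing factor $1/(n_{y_i}-1)$ is genuinely unchanged (this is exactly where $n_{c_1} = n_{c_2}$ is used). Once that bookkeeping is done, the argument is purely a symmetry-plus-uniqueness deduction and requires no further computation. I would also remark in passing that the same reasoning, applied to the permutations that act within a single class block, recovers the intra-class symmetry used implicitly elsewhere, and that composing such block-permutations is what ultimately yields the full block structure asserted in Lemma \ref{lem:scl_symm_global}.
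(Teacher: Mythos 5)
Your proposal is correct and uses essentially the same mechanism as the paper: invariance of the objective and feasible set under the block permutation swapping the two equal-sized classes, combined with the uniqueness of the minimizer from Lemma~\ref{lem:unique_main} to force $\Pibf^\top\opt\Gb\Pibf=\opt\Gb$ (the paper constructs exactly this $n\times n$ permutation, denoted $\widetilde{\Rb}$, though it first reduces to a $k\times k$ parameterization $\Gammab$ via the NC property). Your version is marginally more self-contained since it argues directly on the convex program without invoking the NC reduction, and you correctly flag the need for intra-class permutation invariance to cover arbitrary pairs $(i,j)$.
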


Recall from \cor\ref{cor:tight_scl}, the relaxation \eqref{eq:SCL_UFM_cvx} is tight, and finding $\opt{\Gb}$ characterizes the optimal $\Hbopt$ up to a rotation. This lemma reduces the dimension of the original problem in the case of STEP imbalanced data. 
In the specific case of balanced data ($n_c=n/k$), the optimal solution reduces to ETF, as also previously shown \cite{graf2021dissecting}.
\begin{theorem}\label{thm:scl_ETF}
    Assume that the training set is balanced ($R=1$). The optimal solution $\Hbopt$ of the UFM \eqref{eq:SCL_UFM} follows an ETF. In other words, there exists vectors $\mub_1....\mub_k$ such that for $\Mb=[\mub_1,...\mub_k]$ we have $\Mb^\top\Mb \propto \Id_k-\frac{1}{k}\ones_k\ones_k^\top$ and $\opt{\hb}_i = \mub_{y_i}$.
\end{theorem}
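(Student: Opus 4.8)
~Since $R=1$, all $k$ classes share the common size $m:=n/k$, so the SC objective \eqref{eq:scl_loss} is invariant under relabeling of the classes. The idea is to let the structural results already in hand --- collapse at every stationary point (\lem\ref{lem:stationary_NC}), tightness of the convex relaxation (\cor\ref{cor:tight_scl}), and uniqueness/symmetry of the optimal Gram matrix (\lem\ref{lem:unique_main}, \lem\ref{lem:scl_symm_global}, \lem\ref{lem:symm_main}) --- do the heavy lifting, reducing \eqref{eq:SCL_UFM} to a two-parameter scalar program that can then be solved explicitly. Concretely, by \lem\ref{lem:stationary_NC} the global minimizer $\Hbopt$ is collapsed, $\opt\hb_i=\opt\mub_{y_i}$ for some $\Mbopt=[\opt\mub_1,\dots,\opt\mub_k]\in\R^{d\times k}$; and by \lem\ref{lem:scl_symm_global} specialized to $R=1$ (where the majority/minority split is vacuous) together with \lem\ref{lem:symm_main}, the Gram matrix $\opt\Gb={\Hbopt}^\top\Hbopt$ is the \emph{unique} optimizer of \eqref{eq:SCL_UFM_cvx} and is fully symmetric, i.e.\ $\opt\Gb_{i,j}$ depends only on whether $y_i=y_j$. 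With collapse this means $\|\opt\mub_c\|^2=a$ for all $c$ and $\opt\mub_c^\top\opt\mub_{c'}=b$ for all $c\ne c'$, equivalently
$$ {\Mbopt}^\top\Mbopt=(a-b)\Id_k+b\,\ones_k\ones_k^\top,\qquad \opt\Gb=\big((a-b)\Id_k+b\,\ones_k\ones_k^\top\big)\otimes\ones_m\ones_m^\top. $$

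\noindent\textbf{Reduction to a scalar program.}~Substituting the collapsed ansatz into \eqref{eq:scl_loss}, every training sample contributes the same amount and the objective becomes
$$ g(a,b):=-a+\log\!\big((m-1)e^{a}+m(k-1)e^{b}\big). $$
The feasibility constraints of \eqref{eq:SCL_UFM_cvx} translate exactly into $\opt\Gb_{i,i}=a\le 1/\tau$ together with $\opt\Gb\succeq 0$, i.e.\ the eigenvalue inequalities $a-b\ge 0$ and $a+(k-1)b\ge 0$ of the $k\times k$ block (these also force $a\ge 0$); realizability of such a Gram matrix by an embedding matrix $\Hb$ is automatic because $\rank\opt\Gb\le k<d$. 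Hence $(a,b)$ minimizes $g$ over this compact set, and $\opt\Gb$ is globally optimal if and only if $(a,b)$ does so.

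\noindent\textbf{Solving the scalar program.}~For fixed $a$, $\partial_b g=m(k-1)e^{b}/\big((m-1)e^{a}+m(k-1)e^{b}\big)>0$, so $g$ is strictly increasing in $b$ and the optimal $b$ saturates the positive-semidefiniteness constraint: $b=-a/(k-1)$. Plugging this in gives ${\Mbopt}^\top\Mbopt=\tfrac{ak}{k-1}\big(\Id_k-\tfrac1k\ones_k\ones_k^\top\big)$, a scaled ETF Gram matrix as soon as $a>0$. Finally, along $b=-a/(k-1)$ one computes
$$ \frac{d}{da}\,g\big(a,-\tfrac{a}{k-1}\big)=\frac{-mk\,e^{-a/(k-1)}}{(m-1)e^{a}+m(k-1)e^{-a/(k-1)}}<0 $$
for all $a$, so $g$ is strictly decreasing on $[0,1/\tau]$ and the minimum is attained at $a=1/\tau>0$. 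Therefore ${\Mbopt}^\top\Mbopt=\tfrac{k}{\tau(k-1)}\big(\Id_k-\tfrac1k\ones_k\ones_k^\top\big)\propto\Id_k-\tfrac1k\ones_k\ones_k^\top$ with $\opt\hb_i=\opt\mub_{y_i}$, as claimed.

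\noindent\textbf{Main obstacle.}~Essentially all the difficulty is absorbed by \thm\ref{thm:landscape_scl} and \lems\ref{lem:scl_symm_global}--\ref{lem:symm_main}: once we know the global optimum is collapsed with a fully symmetric Gram matrix, the problem is genuinely two-dimensional. The only points requiring care are (i) the reduction step --- checking that restricting to the symmetric, collapsed ansatz loses nothing, so that minimizing $g(a,b)$ really computes the global value --- and (ii) recognizing that the optimal off-diagonal $b$ saturates the $\opt\Gb\succeq0$ constraint; it is precisely this saturation that produces the $-1/k$ ratio, hence the maximal-separation ETF geometry. The accompanying monotonicity computations are elementary.
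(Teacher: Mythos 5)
Your proof is correct and follows essentially the same route as the paper: both invoke \lem\ref{lem:stationary_NC} and \lem\ref{lem:scl_symm_global} to reduce the balanced problem to a low-dimensional program over the within-class and between-class Gram entries, and both solve it by observing that the loss is monotone in their difference, so the optimum saturates the diagonal bound $1/\tau$ and the PSD constraint $a+(k-1)b\ge 0$, yielding the $-1/(k-1)$ ratio of the ETF. The only cosmetic difference is that the paper keeps the diagonal and within-class off-diagonal as separate variables and phrases the reduced problem as a linear program with surrogate objective $\theta-\beta$, whereas you merge them via NC upfront and differentiate the actual loss $g(a,b)$ directly.
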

}

{
This result follows directly from \lem\ref{lem:scl_symm_global}. Specifically, in the balanced case, $\alpha_\text{maj}=\alpha_\text{minor}$, $\beta_\text{maj}=\beta_\text{minor}$, and the optimal $\Gb^*$ satisfies,
\begin{align*}
    \Gb^* = \left(\alpha - \beta\right)\Id_n + \left((\beta-\theta)\Id_k + \theta \Jd_k\right) \otimes \Jd_{(\frac{n}{k})},
    \end{align*}
for some scalar $\alpha,\,\beta,\,\theta\in\R$. On the other hand, $\opt{\Gb}$ is feasible if it is positive semi-definite and its diagonal entries are less than $1/\tau$. Thus, we can show solving UFM \eqref{eq:SCL_UFM} reduces to optimizing a linear program as follows
\begin{align}
    (\alpha^*,\beta^*,\theta^*) = \arg\min_{\alpha,\beta,\theta}& \, \theta - \beta, \quad \\
    \text{sub. to} &\quad \frac{1}{\tau} - \alpha\geq 0,\nn\\
         &\alpha-\beta\geq0,\nn\\
         & \alpha + \beta (\frac{n}{k}-1) - \theta\frac{n}{k} \geq 0,\nn\\
         & \alpha + \beta(\frac{n}{k}-1) + \theta(n-\frac{n}{k}) \geq 0.\nn
\end{align}
From above, we can see $\alpha^*=\beta^*=1/\tau$ and $\theta^*=-1/(\tau(k-1))$, and ETF is the only solution. 
However, this is not the case if the training set is imbalanced:
\begin{propo}\label{lem:etf_counter_scl}{\cite[Lemma D.2]{SCL+}}
    If classes in the training set are not balanced, 
    the global solution of \eqref{eq:SCL_UFM} is not an ETF. 
\end{propo}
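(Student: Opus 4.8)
\emph{Proof plan.} I would argue by contradiction, showing that an ETF cannot satisfy the necessary optimality conditions of the convex relaxation once the classes are imbalanced. (I take $k\ge 3$: for $k=2$ an ETF is simply $\opt{\mub}_1=-\opt{\mub}_2$, the single inter-class inner product is then effectively unconstrained, and one checks directly that the ETF stays optimal even under imbalance; the statement is to be read for $k\ge 3$, as in its STEP-imbalanced context.) The first step is a reduction to a $k\times k$ convex program: by \lem\ref{lem:stationary_NC} any global minimizer $\Hbopt$ of \eqref{eq:SCL_UFM} satisfies NC, $\opt{\hb}_i=\opt{\mub}_{y_i}$, so restricting the objective to NC configurations makes it a convex function of the reduced Gram $\opt{\Gbtilde}:=\opt{\Mb}^\top\opt{\Mb}\in\R^{k\times k}$, namely
\begin{align*}
\Lc_\text{red}(\Gbtilde)=\tfrac1n\textstyle\sum_{c\in[k]}n_c\big(-\Gbtilde_{cc}+\log T_c\big),\qquad T_c:=(n_c-1)e^{\Gbtilde_{cc}}+\textstyle\sum_{c'\ne c}n_{c'}e^{\Gbtilde_{c,c'}},
\end{align*}
which is convex (a log-sum-exp composition). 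Since $d>k$, every $\Gbtilde\succeq 0$ with $\Gbtilde_{cc}\le 1/\tau$ is realized by an NC configuration, so $\opt{\Gbtilde}$ minimizes $\Lc_\text{red}$ over $\{\Gbtilde\succeq 0:\Gbtilde_{cc}\le 1/\tau\}$. If $\Hbopt$ were an ETF, then $\opt{\Gbtilde}=t(\Id_k-\tfrac1k\ones_k\ones_k^\top)$ for some $t>0$ (\thm\ref{thm:scl_ETF}), and it suffices to show this is not a KKT point of the reduced program unless all $n_c$ are equal.

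Next I would use the KKT conditions. Slater holds ($\epsilon\Id_k$ is strictly feasible), so optimality requires multipliers $\lambdab\succeq 0$ and $\Bb:=\nabla\Lc_\text{red}(\opt{\Gbtilde})+\diag(\lambdab)\succeq 0$ with $\Bb\,\opt{\Gbtilde}=0$. Because $\mathrm{null}\big(t(\Id_k-\tfrac1k\ones_k\ones_k^\top)\big)=\mathrm{span}\{\ones_k\}$, complementary slackness forces $\Bb=\gamma\,\ones_k\ones_k^\top$ with $\gamma\ge 0$; in particular all off-diagonal entries of $\nabla\Lc_\text{red}(\opt{\Gbtilde})$ must coincide (the diagonal condition is then automatically satisfiable through $\lambdab$, since the diagonal gradient entries are negative while $\gamma\ge 0$). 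A short computation of the symmetrized gradient at $\opt{\Gbtilde}$ gives, for $c\ne c'$,
\begin{align*}
\big[\nabla\Lc_\text{red}(\opt{\Gbtilde})\big]_{c,c'}=\frac{B}{2n}\,n_c n_{c'}\Big(\tfrac1{T_c}+\tfrac1{T_{c'}}\Big),\qquad T_c=(n_c-1)A+(n-n_c)B,
\end{align*}
with $A:=e^{t(k-1)/k}>B:=e^{-t/k}>0$ and $T_c$ now depending on $c$ only through $n_c$.

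The final step extracts the contradiction. Writing $f(s):=s/\big((A-B)s+nB-A\big)$, the off-diagonal entry for sizes $(n_c,n_{c'})$ is $\tfrac{B}{2n}\big(n_cf(n_{c'})+n_{c'}f(n_c)\big)$, and I must show this cannot be constant over all pairs of class sizes unless the sizes are all equal. If a size $a$ is shared by two classes, equating the off-diagonal entry of an $(a,a)$-pair with that of an $(a,c)$-pair for any other present size $c$ yields, after clearing denominators, $(a-c)\big((A-B)c+2(nB-A)\big)=0$, hence $c=\tfrac{2(A-nB)}{A-B}$; but the hypothesis on $\tau$ forces $n>A/B=e^{t}$ (from $t(k-1)/k\le 1/\tau$ and $\tfrac k{k-1}\le 2$, so $e^{t}<e^{2/\tau}<n$), making $c$ negative -- impossible for a class size (and if two distinct other sizes were present this would even make them coincide). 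If instead all $k\ge 3$ class sizes are distinct, take three of them $a<b<c$: the three pairwise-constancy equations determine $1/T(s)$ at $s\in\{a,b,c\}$, and imposing that $T(s)$ be affine (as $T(s)=(A-B)s+nB-A$ indeed is) produces two incompatible expressions for the slope $A-B$, forcing $b=c$. In all cases the off-diagonal gradient entries fail to be equal, so $\opt{\Gbtilde}$ is not a KKT point -- a contradiction. Hence the global solution of \eqref{eq:SCL_UFM} is not an ETF.

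The main obstacle is this last step: verifying that constancy of $(s,s')\mapsto sf(s')+s'f(s)$ over all pairs of class sizes collapses them to a single value. This is precisely where $k\ge 3$ is needed (to have enough pairs of classes to compare) and where the assumed lower bound on $\tau$ enters, to rule out the borderline two-value case via $n>A/B$; by contrast, the reduction and the KKT bookkeeping are routine once one has the NC property (\lem\ref{lem:stationary_NC}), convexity of the relaxation, and complementary slackness.
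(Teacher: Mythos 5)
The paper does not actually prove this proposition: it is imported verbatim from \cite[Lemma D.2]{SCL+}, so there is no in-paper argument to compare yours against. Your blind proof is nonetheless a legitimate, essentially self-contained derivation that fits squarely into the paper's own machinery: the reduction to the $k\times k$ convex program is exactly the program \eqref{eq:gamma_cvx} in the appendix (your $-\Gammab_{cc}+\log T_c$ is their $\log\big((n_c-1)+\sum_{c'\neq c}n_{c'}e^{-\Gammab_{cc}+\Gammab_{cc'}}\big)$), Slater/KKT and the forced form $\Bb=\gamma\ones_k\ones_k^\top$ from complementary slackness against $\mathrm{null}(\Id_k-\tfrac1k\ones_k\ones_k^\top)=\mathrm{span}\{\ones_k\}$ are correct, and your computation of the symmetrized off-diagonal gradient and of $T_c=(n_c-1)A+(n-n_c)B$ checks out. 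Your Case 1 (a repeated class size) is fully rigorous: the factorization $(a-c)\big((A-B)c+2(nB-A)\big)=0$ is correct, and the standing assumption $\tau>2/\log\frac{n-1}{n_{\max}-1}$ together with $\Gammab_{cc}=t(k-1)/k\le 1/\tau$ indeed gives $e^{t}=A/B<n$, killing the spurious root. This already covers the paper's STEP-imbalanced setting with $k\ge3$.

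Two remarks. First, your $k=2$ caveat is a genuine and worthwhile observation, not a dodge: for two imbalanced classes the only off-diagonal constraint is vacuous, $\Bb=\gamma\ones_2\ones_2^\top$ annihilates $\Id_2-\tfrac12\ones_2\ones_2^\top$, and the multipliers $\lambda_c=\gamma+\tfrac{n_c(n-n_c)B}{nT_c}>0$ are consistent with the active diagonal constraints, so the ETF remains the (unique, by \lem\ref{lem:unique_main}) optimizer; the proposition must be read with $k\ge3$. Second, your Case 2 (three pairwise-distinct sizes) is the only place where the writing is looser than the mathematics needs to be. The clean version: solving the three constancy equations gives $1/T(s)=\tfrac{\kappa}{2abc}(\sigma-2s)$ with $\sigma=a+b+c$ at $s\in\{a,b,c\}$, so $\big((A-B)s+nB-A\big)(\sigma-2s)-\tfrac{2abc}{\kappa}$ is a quadratic in $s$ with nonzero leading coefficient $-2(A-B)$ vanishing at three distinct points, which is impossible. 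Stated this way, no ``two incompatible slopes'' bookkeeping is needed. With that patch the argument is complete.
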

}
{
Overall, in balanced scenarios, as also previously shown \cite{graf2021dissecting,fang2021exploring}, SC learns the same embeddings as the CE loss. Similarly, the optimal solutions of the SC loss change with imbalances.
We leave further investigations for comparing the solutions in the imbalanced case for CE and SC loss for future work.
}


\section{Discussion}
In this paper, complementing the claims and observations in \cite{graf2021dissecting,yaras2022neural}, we show that the SC loss has no spurious local optimal solutions under the UFM despite being non-convex. Although we leave the characterization of global solutions for general training distributions to future works, we exploit how the convex relaxation of the problem can be used for uncovering the properties of the unique global minimizer.
{We believe studying the SC loss is an intermediary step for expanding our understanding of embeddings geometry to the less-understood self-supervised setting.}
{While our focus is on discovering the local/global optimal embeddings geometries in the supervised contrastive learning, it is interesting to study the optimization dynamics analytically as a future direction. Previous works have empirically investigated the degree of convergence of the UFM and benchmark deep models to the optimal geometries for both the CE \cite{NC,zhu2021geometric,seli} and the SC  \cite{graf2021dissecting,SCL+} loss. Based on these reports, the training setup, such as the imbalance ratio or the loss hyper-parameters, play a significant role on the degree of convergence to the theoretical prediction by UFM. We hypothesize that the training setup can lead to more complex loss landscape, but gradient descent eventually leads to the global solutions asymptotically in the number of epochs. This conjecture remains to be further investigated and formalized.}
\bibliographystyle{IEEEtran}
\bibliography{refs}









\newpage
\appendices
\onecolumn
\section{Proof of \lem\ref{lem:stationary_NC}}
\begin{lemma}[Gradient of SC] \label{lem:scl_grad}
    For a given matrix $\Hb_{d\times n}$, let $\Gb=\Hb^\top\Hb$ and define $\Pb_{n\times n}$ and $\Qb_{n\times n}$ as follows,
    \begin{align}
        \Pb_{i,j} &= \begin{cases}
        0, &i=j\\
        {\exp{(\Gb_{i,j}})}/{\sum_{\ell\neq i}\exp(\Gb_{i,\ell})}  &\text{\emph{o.w.}}\end{cases} \\
        \Qb_{i,j} &= \begin{cases}
        0, &i=j \,\text{\emph{  or  }}\,y_i\neq y_j\\
        1/(n_c-1)  &i\neq j,~y_i=y_j=c\end{cases} 
    \end{align}
    Then, the gradient of the SC loss $\Lc_\text{SC}(\cdot)$ and the convex SC loss $\Lc_\text{cvx}(\cdot)$ are as follows,
    \begin{align}
            \nabla_\Hb\Lc_\text{\emph{SC}}(\Hb) &= \Hb\big(\Pb+\Pb^\top)-2\Hb\Qb,\label{eq:scl_grad_h}\\
        \nabla_\Gb\Lc_\text{\emph{cvx}}(\Gb) &= \frac{1}{2}\big(\Pb+\Pb^\top) - \Qb.\label{eq:scl_grad_g}
    \end{align}
\end{lemma}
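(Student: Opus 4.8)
The plan is to derive both gradients from a single entrywise differentiation of the Gram-matrix form $\widehat\Lc$ of the loss, and then pass to $\Hb$ through the chain rule. First I would split the logarithm of the quotient and write
\begin{align*}
\widehat\Lc(\Gb) = \frac1n\sum_{i\in[n]}\frac{-1}{n_{y_i}-1}\sum_{\substack{j\neq i\\ y_j=y_i}}\Big(\Gb_{i,j}-\log\sum_{\ell\neq i}e^{\Gb_{i,\ell}}\Big),
\end{align*}
which is smooth in the free matrix $\Gb$. Differentiating the linear term $\Gb_{i,j}$ with respect to $\Gb_{a,b}$ gives the indicator $\mathbf 1\{a=i,\,b=j\}$; summed against the weights $-\tfrac{1}{n(n_{y_i}-1)}$ over the admissible pairs this collapses to $-\tfrac1n\Qb_{a,b}$, since the surviving contribution forces $b\neq a$ and $y_b=y_a$. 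Differentiating the log-sum-exp term gives $-\mathbf 1\{a=i\}\,\Pb_{i,b}$, with the constraint $b\neq i$ absorbed into the convention $\Pb_{i,i}=0$; and because the inner sum over $j$ has exactly $n_{y_i}-1$ terms, the factor $n_{y_i}-1$ cancels against the weight, leaving $+\tfrac1n\Pb_{a,b}$. Adding the two pieces yields $\nabla_\Gb\widehat\Lc(\Gb)=\tfrac1n(\Pb-\Qb)$.

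Next I would symmetrize. From $\Lc_\text{cvx}(\Gb)=\tfrac12\big(\widehat\Lc(\Gb)+\widehat\Lc(\Gb^\top)\big)$ and the chain rule, $\nabla_\Gb\big[\widehat\Lc(\Gb^\top)\big]=\big[(\nabla\widehat\Lc)(\Gb^\top)\big]^\top$; since $\Qb$ is symmetric and, at the symmetric argument $\Gb=\Hb^\top\Hb$ relevant here, $\Pb(\Gb^\top)=\Pb(\Gb)=\Pb$, this term equals $\tfrac1n(\Pb^\top-\Qb)$. Hence $\nabla_\Gb\Lc_\text{cvx}(\Gb)=\tfrac1n\big(\tfrac12(\Pb+\Pb^\top)-\Qb\big)$, which is \eqref{eq:scl_grad_g} up to the overall $1/n$ prefactor carried by the loss. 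Finally, writing $\Lc_\text{SC}(\Hb)=\widehat\Lc(\Hb^\top\Hb)$ and using the identity $\nabla_\Hb f(\Hb^\top\Hb)=\Hb\big(\nabla f(\Gb)+\nabla f(\Gb)^\top\big)$ with $\Gb=\Hb^\top\Hb$ and $\nabla f=\tfrac1n(\Pb-\Qb)$ gives $\nabla_\Hb\Lc_\text{SC}(\Hb)=\tfrac1n\big(\Hb(\Pb+\Pb^\top)-2\Hb\Qb\big)$, i.e.\ \eqref{eq:scl_grad_h}.

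There is no genuine obstacle; the computation is routine and the only care needed is bookkeeping. The two points I would check most carefully are: (i) that the weighted double sum telescopes as claimed --- in particular that $\#\{j:j\neq i,\ y_j=y_i\}=n_{y_i}-1$, which is exactly what makes the weight $\tfrac{1}{n_{y_i}-1}$ vanish from the $\Pb$ contribution; and (ii) that $\Pb$ is \emph{not} symmetric even when $\Gb$ is, because the softmax normalization is row-wise, so the symmetrization step and the $\Hb^\top\Hb$ chain rule genuinely turn $\Pb$ into $\Pb+\Pb^\top$ rather than acting trivially. An alternative route is to differentiate $\Lc_\text{SC}$ directly in the columns $\hb_a$; it reaches the same expressions but is messier, so I would favor the Gram-matrix computation above.
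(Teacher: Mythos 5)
Your proof is correct, and it reaches the two formulas by a mildly different route than the paper. The paper simplifies $\Lc_\text{SC}$ and differentiates directly with respect to each column $\hb_i$, reassembling the resulting sums into $\Hb(\Pb+\Pb^\top)-2\Hb\Qb$; the gradient in $\Gb$ is then asserted to follow ``similarly,'' i.e.\ the paper effectively performs two parallel differentiations. You instead do a single entrywise differentiation of the Gram-matrix form, obtaining $\nabla_\Gb\widehat\Lc(\Gb)=\tfrac1n(\Pb-\Qb)$, and derive both claims from it: \eqref{eq:scl_grad_g} by symmetrizing, and \eqref{eq:scl_grad_h} via the identity $\nabla_\Hb f(\Hb^\top\Hb)=\Hb\big(\nabla f(\Gb)+\nabla f(\Gb)^\top\big)$. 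This buys a few things: only one computation to check; a transparent explanation of why the symmetrized combination $\Pb+\Pb^\top$ must appear (you correctly note that $\Pb$ is not symmetric even for symmetric $\Gb$, because the softmax normalization is row-wise); and an explicit accounting of the $1/n$ prefactor, which the main-text definition \eqref{eq:scl_loss} carries but the paper's appendix derivation and the lemma's stated formulas silently drop --- your remark that the result matches ``up to the overall $1/n$'' is exactly the right caveat, and is arguably more careful than the paper itself. The cancellation of the weight $1/(n_{y_i}-1)$ against the $n_{y_i}-1$ identical log-sum-exp terms, which you single out as the one step worth double-checking, is indeed the only nontrivial bookkeeping in either version. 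No gaps.
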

\begin{proof}
    Here, we prove \eqref{eq:scl_grad_h}. The proof of \eqref{eq:scl_grad_g} can be derived similarly. First, we simplify the expression of the loss.

\begin{align*}
    \Lc_\text{SC}(\Hb)  &= \sum_{i\in[n]}\frac{1}{n_{y_i}-1}\sum_{\substack{j\ne i\\y_j=y_i}}-\log\bigg(\frac{\exp(\hb_i^\top\hb_j)}{\sum_{\ell\neq i}\exp(\hb_i^\top\hb_\ell)}\bigg)\\
    %
    %
    &= \sum_{i\in[n]} \log\bigg({\sum_{\ell\neq i}\exp(\hb_i^\top\hb_\ell)}\bigg)-\sum_{i\in[n]}\frac{1}{n_{y_i}-1}\sum_{\substack{j\ne i\\y_j=y_i}}\hb_i^\top\hb_j.
\end{align*}
Now we can compute the gradient with respect to each column $\hb_i$ as follows,
\begin{align*}
      \frac{\partial \Lc_\text{SC}(\Hb)}{\partial \hb_i} &= \frac{{\sum_{\ell\neq i}\hb_\ell\exp(\hb_i^\top\hb_\ell)}}{{\sum_{\ell\neq i}\exp(\hb_i^\top\hb_\ell)}} + \sum_{j\neq i} \frac{{\hb_j\exp(\hb_j^\top\hb_i)}}{{\sum_{\ell\neq j}\exp(\hb_j^\top\hb_\ell)}} - \frac{2}{n_{y_i}-1}\sum_{\substack{j\neq i\\y_j=y_i}}\hb_j \\
    &=\sum_{j\neq i}\hb_j\Pb_{i,j} + \sum_{j\neq i} \hb_j\Pb_{j,i} - \frac{2}{n_{y_i}-1}\sum_{\substack{j\neq i\\y_j=y_i}}\hb_j\\
    &= \sum_{j\in[n]}\hb_j(\Pb_{i,j}+\Pb_{j,i}) - \sum_{\substack{j\neq i\\y_j=y_i}}\hb_j\Qb_{i,j}\\
    &=2\Hb(\pb_{i}+{\pb^i})-2\Hb\qb_i,
\end{align*}
    where  ${\pb^i}$ is the $i$-th row of $\Pb$, and $\pb_i,\,\qb_i$ are the $i$-th column of $\Pb$ and $\Qb$ respectively.
\end{proof}

\noindent\textbf{Proof of \lem\ref{lem:stationary_NC}.}~
Let $\Cc = \{\Hb\in\R^{d\times n}: \norm{\hb_i}^2\leq \frac{1}{\tau}\}$ be the convex feasible set, and suppose ${\Hb}\in\Cc$ is a feasible point that does not satisfy the NC property. Without loss of generality, we assume $y_1=y_2=1$ and ${\hb}_1\neq{\hb}_2$. Define $\widehat{\Hb} = [{\hb}_2,{\hb}_1,{\hb}_3,...,{\hb}_n]$ by exchanging the first and second column of ${\Hb}$. By the assumption $\widehat{\Hb}\neq{\Hb}$ and $\widehat{\Hb}-{\Hb}=[{\hb}_2-{\hb}_1,-({\hb}_2-{\hb}_1),0,...,0]$. Clearly, $\widehat{\Hb}$ remains feasible in the problem. We will show that 
    $f(\Hb):=\langle\nabla_{\Hb} \Lc_\text{SC}({\Hb}),\, \widehat{\Hb}-{\Hb}\rangle < 0.$
Recall the gradient of the SC loss from \lem\ref{lem:scl_grad}, and define $\Gb := \Hb^\top\Hb$.
\begin{align*}
    f(\Hb) &= \langle({\hb}_2-{\hb}_1),\frac{\partial \Lc_\text{SC}({\Hb})}{\partial {\hb}_1} - \frac{\partial \Lc_\text{SC}({\Hb})}{\partial {\hb}_2}\rangle\\
    &= ( {\hb}_2- {\hb}_1)^\top \Big(\sum_{j\in[n]} {\hb}_j(\Pb_{1,j}+\Pb_{j,1} - \Pb_{2,j} - \Pb_{j,2}) + \frac{2}{n_1-1}( {\hb}_1- {\hb}_2)\Big)\\
    &= \sum_{j\in[n]}( {\Gb}_{2,j}-  {\Gb}_{1,j})(\Pb_{1,j}+\Pb_{j,1} - \Pb_{2,j} - \Pb_{j,2}) - \frac{2}{n_1-1}\norm{ {\hb}_1- {\hb}_2}_2^2,\\
    &= \sum_{j\neq1,2}( {\Gb}_{2,j}-  {\Gb}_{1,j})(\Pb_{1,j}+\Pb_{j,1} - \Pb_{2,j} - \Pb_{j,2}) + (\Pb_{1,2}+\Pb_{2,1} - \frac{2}{n_1-1})\norm{ {\hb}_1- {\hb}_2}_2^2,
\end{align*}
Define,
\begin{align*}
    \alpha_1 &= \sum_{j\neq 1,2}( {\Gb}_{2,j}-  {\Gb}_{1,j})(\Pb_{1,j} - \Pb_{2,j})\\
    \alpha_2 &= \sum_{j\neq 1,2}( {\Gb}_{2,j}-  {\Gb}_{1,j})(\Pb_{j,1} - \Pb_{j,2})= \sum_{j\neq 1,2}( {\Gb}_{j,2}-  {\Gb}_{j,1})(\Pb_{j,1} - \Pb_{j,2})\\
    \alpha_3 &= (\Pb_{1,2}+\Pb_{2,1} - \frac{2}{n_1-1})\norm{ {\hb}_1- {\hb}_2}_2^2.
\end{align*}
Then, $f(\Hb_1) = \alpha_1 + \alpha_2 + \alpha_3$.
We will show if $\tau$ is large enough, $\alpha_1,\alpha_2\leq 0$ and $\alpha_3<0$.
We start by $\alpha_1$:
\begin{align*}
    \alpha_1 &= \sum_{j\neq 1,2}( {\Gb}_{2,j}-  {\Gb}_{1,j})(\Pb_{1,j} - \Pb_{2,j})\\
    &= \sum_{j\neq 1,2} ( {\Gb}_{2,j}-  {\Gb}_{1,j})\Bigg(\frac{\exp( {\Gb}_{1,j})}{\sum_{\ell\neq 1}\exp( {\Gb}_{1,\ell})} - \frac{\exp( {\Gb}_{2,j})}{\sum_{\ell\neq 2}\exp( {\Gb}_{2,\ell})}\Bigg)\\
    &= \frac{\sum_{j\neq 1,2} ({\Gb}_{2,j}-  {\Gb}_{1,j}) \bigg(\exp( {\Gb}_{1,j}) {\sum_{\ell\neq 2}\exp( {\Gb}_{2,\ell})} - \exp( {\Gb}_{2,j}){\sum_{\ell\neq 1}\exp( {\Gb}_{1,\ell})}\bigg)}{\bigg({\sum_{\ell\neq 1}\exp( {\Gb}_{1,\ell})}\bigg)\bigg({\sum_{\ell\neq 2}\exp( {\Gb}_{2,\ell})}\bigg)} =: \frac{\beta_1}{\beta_2}
\end{align*}
Since the exponential function is monotonic, we have
\begin{align*}
    \beta_1 &= \exp(\Gb_{1,2})\sum_{j\neq1,2}(\Gb_{2,j}-\Gb_{1,j})\Big(\exp(\Gb_{1,j})-\exp(\Gb_{2,i})\Big)\\
    &\quad +\sum_{\substack{j\neq1,2\\\ell\neq1,2}}((\Gb_{2,j}+\Gb_{1,\ell})-(\Gb_{2,\ell}+\Gb_{1,j}))\Big(\exp(\Gb_{2,\ell}+\Gb_{1,j})-\exp(\Gb_{2,j}+\Gb_{1,\ell})\Big)
    \leq 0.
\end{align*}
Thus $\alpha_1\leq0$. For $\alpha_2$, similarly with the monotonicity of the exponential function it is straightforward to show $\alpha_2\leq0$.
%
It remains to show $\alpha_3<0$.
Since $\norm{\hb_i}_2^2\leq1/\tau$, for all $i,j\in[n]$, $-1/\tau\leq\Gb_{i,j}\leq1/\tau$. 
Thus, for $\Pb_{1,2}$ we have
\begin{align*}
    \Pb_{1,2} \leq \frac{\exp(1/\tau)}{(n-1)\exp(-1/\tau)}.
\end{align*}
Now, suppose the condition on $\tau$ holds. Then, 
\begin{align*}
    \tau > \frac{2}{\log(\frac{n-1}{n_\text{max}-1})}\geq \frac{2}{\log(\frac{n-1}{n_1-1})} \quad \implies \quad \frac{1}{n_1-1} > \frac{\exp(1/\tau)}{(n-1)\exp(-1/\tau)}\geq\Pb_{1,2}
\end{align*}
We can show the above inequality similarly holds for $\Pb_{2,1}$. Thus,
\begin{align*}
    \Pb_{1,2}+\Pb_{2,1} - \frac{2}{n_1-1} <0 \quad \implies \quad \alpha_3<0,
\end{align*}
and $f(\Hb)<0$. This contradicts the first-order necessary condition for local optimality \eqref{eq:1storder}.
\qed

\section{Proof of \lem\ref{lem:unique_main}}
\begin{lemma}[Strict-convexity of SC loss]\label{lem:strict_cvx}
For the convex SC loss $\Lc_\text{cvx}(\cdot)$, we have,
\begin{align}\label{eq:strict_cvx}
    \Lc_\text{\emph{cvx}}(\Gb+\Vb) > \Lc_\text{\emph{cvx}}(\Gb) + \tr\big(\Vb^\top \nabla \Lc_\text{\emph{cvx}}(\Gb)\big), \quad \text{if} \quad \Vb-\text{\emph{diag}}(\Vb)\not\propto\ones_n\ones_n^\top-\Id_n,
\end{align}
where $\text{\emph{diag}}(\Vb)\in\R^{n\times n}$ is a diagonal matrix whose diagonal entries are the same as $\Vb$.
    In other words, the loss is strictly-convex in any direction satisfying the condition in \eqref{eq:strict_cvx}.
\end{lemma}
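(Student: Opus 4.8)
The plan is to reduce the claim to a statement about the Hessian (or the second-order term in a Taylor expansion) of $\Lc_\text{cvx}$ along the direction $\Vb$, and show it is strictly positive unless $\Vb-\diag(\Vb)\propto\ones_n\ones_n^\top-\Id_n$. Since $\Lc_\text{cvx}(\Gb)=\tfrac12(\widehat\Lc(\Gb)+\widehat\Lc(\Gb^\top))$ and $\widehat\Lc$ is a sum over $i$ of terms of the form $-\log\big(e^{\Gb_{i,j}}/\sum_{\ell\ne i}e^{\Gb_{i,\ell}}\big)$, I would first observe that along each row $i$, $\widehat\Lc$ depends on $\Gb$ only through the off-diagonal entries $(\Gb_{i,\ell})_{\ell\ne i}$, and the relevant per-row building block is the log-sum-exp function $g_i(\Gb) = \log\sum_{\ell\ne i} e^{\Gb_{i,\ell}}$ composed with the linear map $\Gb \mapsto (\Gb_{i,\ell})_{\ell \ne i}$ (the remaining $-\tfrac1{n_{y_i}-1}\sum_{j:y_j=y_i}\Gb_{i,j}$ piece is linear and contributes nothing to curvature). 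So the whole convex loss is a nonnegative combination (over $i$, and over the two terms from the symmetrization) of log-sum-exp functions of disjoint-ish coordinate blocks plus a linear term.

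Next I would invoke the standard fact that log-sum-exp is convex, and strictly convex modulo its constant direction: for $g(\zb)=\log\sum_{\ell} e^{z_\ell}$ the Hessian is $\diag(\pbf)-\pbf\pbf^\top$ where $\pbf$ is the softmax of $\zb$, and this matrix is PSD with kernel exactly $\mathrm{span}(\ones)$; hence $g$ restricted to any line is strictly convex unless the line is parallel to $\ones$. Applying this row-by-row: the second-order term of $\Lc_\text{cvx}$ in direction $\Vb$ is a sum over $i$ (from $\widehat\Lc(\Gb)$) of quadratic forms in $(\Vb_{i,\ell})_{\ell\ne i}$, each vanishing iff $(\Vb_{i,\ell})_{\ell\ne i}$ is constant in $\ell$, plus a symmetric contribution over $i$ (from $\widehat\Lc(\Gb^\top)$) of quadratic forms in $(\Vb_{\ell,i})_{\ell\ne i}$, each vanishing iff $(\Vb_{\ell,i})_{\ell\ne i}$ is constant in $\ell$. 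For the total second-order term to vanish we therefore need: for every $i$, the off-diagonal entries in row $i$ of $\Vb$ are all equal to some $a_i$, and the off-diagonal entries in column $i$ of $\Vb$ are all equal to some $b_i$. Comparing entries $\Vb_{i,j}=a_i$ and $\Vb_{i,j}=b_j$ for all $i\ne j$ forces $a_i=b_j$ for all $i\ne j$, which (for $n\ge 3$) forces a single common constant $c$, i.e. $\Vb_{i,j}=c$ for all $i\ne j$, i.e. $\Vb-\diag(\Vb)=c(\ones_n\ones_n^\top-\Id_n)$. Contrapositively, if $\Vb-\diag(\Vb)\not\propto\ones_n\ones_n^\top-\Id_n$ the second-order term is strictly positive, which — since all higher-order structure here is governed by the (globally convex) log-sum-exp — yields the claimed strict inequality $\Lc_\text{cvx}(\Gb+\Vb) > \Lc_\text{cvx}(\Gb) + \tr(\Vb^\top\nabla\Lc_\text{cvx}(\Gb))$.

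To make the last step rigorous rather than just "Hessian is PD", I would phrase it via the integral form of Taylor's theorem: $\Lc_\text{cvx}(\Gb+\Vb) - \Lc_\text{cvx}(\Gb) - \tr(\Vb^\top\nabla\Lc_\text{cvx}(\Gb)) = \int_0^1 (1-t)\,\frac{d^2}{dt^2}\Lc_\text{cvx}(\Gb+t\Vb)\,dt$, note the integrand is nonnegative for all $t$ by convexity of log-sum-exp, and strictly positive at (say) $t=0$ by the kernel computation above together with continuity, hence the integral is strictly positive. The main obstacle — really the only non-routine point — is the combinatorial argument that the simultaneous "constant along every row, constant along every column (off the diagonal)" condition pins $\Vb-\diag(\Vb)$ down to a scalar multiple of $\ones_n\ones_n^\top-\Id_n$; one must be slightly careful that it is the off-diagonal pattern that is constrained (the diagonal of $\Vb$ is genuinely free, which is why the conclusion is stated modulo $\diag(\Vb)$) and that $n\ge 3$ is what makes the overlap of row-$i$ and column-$j$ constraints rigid. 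Everything else is a direct appeal to the elementary convexity of the log-sum-exp function.
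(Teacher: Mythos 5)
Your proposal is correct and follows essentially the same route as the paper's proof: both decompose $\Lc_\text{cvx}$ into per-row and per-column log-sum-exp building blocks plus linear terms, compute the Hessian $\diag(\ub)-\ub\ub^\top$ whose quadratic form vanishes exactly on directions that are constant over the off-diagonal coordinates, and conclude with a Taylor-remainder argument. Your version is in fact slightly more explicit than the paper's on the combinatorial step showing that ``constant off-diagonal in every row and every column'' forces $\Vb-\diag(\Vb)$ to be a multiple of $\ones_n\ones_n^\top-\Id_n$ (for $n\geq 3$), a point the paper asserts in contrapositive form without detail.
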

\begin{proof}
For the vector $\gb\in\R^n$, define $f_{ij}(\gb) := \log\big(\sum_{\ell\neq i}e^{-\gb_j+\gb_\ell}\big)$ and 
let the rows and columns of $\Gb$ be defined as,
$$\Gb=\begin{bmatrix}
    \gb_1,&\cdots,&\gb_n
\end{bmatrix}=\begin{bmatrix}
    {\gb^1},&
    \cdots,&
    {\gb^n}
\end{bmatrix}^\top.$$
Then, we can rewrite the loss function as follows,
\begin{align*}
    \Lc_\text{{cvx}}(\Gb) = \frac{1}{2}\sum_{i\in[n]}\frac{1}{n_{y_i}-1}\sum_{\substack{j\neq i\\y_j=y_i}} (f_{ij}(\gb_i) + f_{ij}({\gb^i})).
\end{align*}
Define $\ab_i := \ones_n - \eb_i$, and $\Ab_i := (\Id_n - \frac{1}{n-1}\ab_i\ab_i^\top)$. We first show that for $j\in[n]$, $f(\gb):=f_{ij}(\gb)$ is strictly convex in any direction $\vb$ such that $\Ab_i\vb\neq0$. That is, for such $\vb$, $f(\gb+\vb) > f(\gb) + \vb^\top \nabla f(\gb)$. 

    From Taylor expansion, 
    $
        f(\gb+\vb) = f(\gb) + \vb^\top \nabla f(\gb) + \frac12\vb^\top \nabla^2 f(\gb+t\vb)\vb,
    $ for some $t\in(0,1)$.
    We will show that $\vb^\top \nabla^2 f(\gb)\vb>0$ for all $\gb$, which together with the Taylor expansion proves the statement. It is easy to verify that 
    $$
    \nabla^2 f(\gb) = \text{diag}(\ub) - \ub\ub^\top, \quad \ub_r = \begin{cases} 
    \frac{e^{-\gb_j+\gb_r}}{1+\sum_{\ell\neq i,j} e^{-\gb_j+\gb_\ell}}, & r \neq i\\
    0, & r=i
    \end{cases}.
    $$
    Thus, we equivalently need to show,
    $
        \vb^\top\text{diag}(\ub)\vb > (\ub^\top \vb)^2
    $. Since $\ub\geq0$, and $\ub^\top\ones_n = 1$,
    \begin{align*}
        \vb^\top\text{diag}(\ub)\vb &= \big(\sum_{r\in[n]} \vb_r^2 \ub_r \big)\big(\sum_{r\in[n]} \ub_r \big)\\
        &\geq \big( \sum_{i\in[n]} \vb_r\ub_r\big)^2 = (\ub^\top \vb)^2,
    \end{align*}
    where we used Cauchy-Schawrz inequality. The equality holds only if there exists a constant $\alpha$ such that for all $r\neq i$, $\vb_r = \alpha$ which contradicts the assumption $\Ab_i\vb\neq 0$. 

    Now consider direction $\Vb$ satisfying condition \eqref{eq:strict_cvx}. By convexity of $f_{ij}(\cdot)$, for all $i\in[n]$ and $j\in\{\ell|\,y_\ell=y_i,\,\ell\neq i\}$, we have,
\begin{align*}
    f_{ij} (\gb_{i} + \vb_{i}) \geq f_{ij} (\gb_{i}) + \vb_{i} ^ \top \nabla f_{ij} (\gb_{i}),\quad
    f_{ij} ({\gb^{i}} + {\vb^{i}}) \geq f_{ij} ({\gb^{i}}) + {\vb^{i}}^\top  \nabla f_{ij} (\gb^{i}).
\end{align*}
However, since $\Vb-\text{{diag}}(\Vb)\not\propto\ones_n\ones_n^\top-\Id_n$, there exists ${i'}\in[n]$ such that either  $\Ab_{i'}\vb_{i'}\neq 0$ or $\Ab_{i'}\vb^{i'}\neq 0$ (or both). Without loss of generality, suppose $\Ab_{i'}\vb_{i'}\neq 0$. Thus, by {strict-convexity}, for all $j\in\{\ell|\,y_\ell=y_i,\,\ell\neq i\}$, for $\gb_{i'}$ we have the following strict inequality.
    \begin{align*}
        f_{i'j} (\gb_{i'} + \vb_{i'}) > f_{i'j} (\gb_{i'}) + \vb_{i'} ^ \top \nabla f_{i'j} (\gb_{i'}).
    \end{align*}
    Summing the inequalities over all $i\in[n]$ and $j\in\{\ell|\,y_\ell=y_i,\,\ell\neq i\}$, we get \eqref{eq:strict_cvx}, and the proof is complete.
\end{proof}

\noindent\textbf{Proof of \lem\ref{lem:unique_main}}
    We first show that at the optimal solution, we meet the condition \eqref{eq:strict_cvx}. Suppose $\Gb^*-\diag(\Gb^*)=\alpha(\ones_n\ones_n^\top-\Id_n),\,\,\alpha\in\R$, and consider 
    \begin{align*}
        \Gb_1 = \frac{1}{\tau}\begin{bmatrix}
            \ones_{n_1}\ones_{n_1}^\top & 0\\
            0 & \ones_{(n-n_1)}\ones_{(n-n_1)}^\top
        \end{bmatrix}.
    \end{align*}
    $\Gb_1$ is positive semi-definite and feasible, so $\Lc(\Gb^*)\leq\Lc(\Gb_1)$. Then, we have,
    \begin{align*}
        \Lc(\Gb^*)\leq\Lc(\Gb_1) &= n_1\log\big((n_1-1) + (n-n_1)e^{-\frac{1}{\tau}}\big) + \sum_{c\neq1}n_c\log\big(n-n_1-1+n_1e^{-\frac{1}{\tau}}\big)\\
        &< n_1\log\big(n-1\big) + \sum_{c\neq1}n_c\log\big(n-1\big)\\
        &= n\log\big(n-1\big)\\
        &= \Lc(\Gb^*).
    \end{align*}
    That is $\Lc(\Gb^*)<\Lc(\Gb^*)$ which is contradiction. 

    Now, suppose $\Gb^*_1\neq\Gb^*_2$ are both optimal in $\eqref{eq:SCL_UFM_cvx}$. Since, both $\Gb_1^*$ and $\Gb_2^*$ are feasible, $\frac{1}{2}(\Gb_1^*+\Gb_2^*)$ is also feasible. Then, by optimality of $\Gb_1^*,\Gb_2^*$ and Lemma \ref{lem:strict_cvx},
    \begin{align*}
        \Lc(\Gb_1^*)\leq\Lc\Big(\frac{\Gb_1^*+\Gb_2^*}{2}\Big) < \frac{1}{2}(\Lc(\Gb_1^*)+\Lc(\Gb_2^*)) = \Lc(\Gb_1^*),
    \end{align*}
    which is a contradiction, and the proof is complete.
    \qed

\section{Proof of \lem\ref{lem:scl_symm_global}}
Recall from \lem\ref{lem:stationary_NC} that $\Hbopt$ satisfies the NC property. Thus, $\opt{\Gb}$, given the samples are ordered, has a block-form and can be parameterized by a symmetric $k\times k$ matrix as follows: 
\begin{align}\label{eq:G_NC}
    \exists\,\Gammab\in \R^{k\times k},\quad \text{s.t.}\quad \opt{\Gb}_{i,j} = \Gammab_{y_i,y_j}.
\end{align}
Also, recall from \lem\ref{lem:unique_main} that the global solution is unique.



Given \eqref{eq:G_NC}, we can find $\opt{\Gb}$, the solution of \eqref{eq:SCL_UFM_cvx}, by the following equivalent problem.
\begin{align}\label{eq:gamma_cvx}
      \opt{\Gammab},\opt{\Gb}\in\arg\min_{\Gammab,\Gb} f(\Gammab) : = \frac{1}{n}\sum_{c\in[k]}n_c&\log\bigg((n_c-1) + \sum_{c'\neq c}n_{c'}\exp({-\Gammab_{c,c}+\Gammab_{c,c'}})\bigg), \quad \\
      \text{s.t.}\quad &\Gammab_{c,c}\leq \frac{1}{\tau}\,\quad \forall c\in[k],\quad\nn\\
      &\Gb_{i,j}=\Gammab_{y_i,y_j}\,\quad \forall i,j\in[n]\nn\\
      &\Gb \succcurlyeq 0.\nn
\end{align}
We need the following lemma to complete the proof.
\begin{lemma}[Restatement of \lem\ref{lem:symm_main}]\label{lem:Gamma_symm}
    Consider problem \eqref{eq:gamma_cvx}. If $n_{c_1}=n_{c_2}$, we have
    \begin{align}\label{eq:gamma_cond}
        \opt{\Gammab}_{c_1,c_1} = \opt{\Gammab}_{c_2,c_2}, \qquad \opt\Gammab_{c,c_1} = \opt{\Gammab}_{c,c_2},\quad \forall c\neq c_1,c_2
    \end{align}
\end{lemma}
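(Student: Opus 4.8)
The plan is to exploit the uniqueness of the optimal solution (\lem\ref{lem:unique_main}) together with the invariance of the convex program \eqref{eq:gamma_cvx} under a suitable permutation of class labels. First I would observe that the objective $f(\Gammab)$ and all constraints in \eqref{eq:gamma_cvx} are built only from the class sizes $n_c$ and the entries of $\Gammab$; in particular, if we permute the labels $c_1 \leftrightarrow c_2$ (keeping all other labels fixed) and correspondingly permute the rows and columns of $\Gammab$, the objective value is unchanged precisely because $n_{c_1}=n_{c_2}$. Concretely, let $\sigma$ be the transposition swapping $c_1$ and $c_2$, and define the induced action on a $k\times k$ symmetric matrix by $(\sigma\cdot\Gammab)_{c,c'} = \Gammab_{\sigma(c),\sigma(c')}$. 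One checks term-by-term that $f(\sigma\cdot\Gammab) = f(\Gammab)$: the summand indexed by class $c$ in $f(\sigma\cdot\Gammab)$ equals the summand indexed by $\sigma(c)$ in $f(\Gammab)$, using $n_c = n_{\sigma(c)}$ for every $c$ (which holds because $\sigma$ only swaps two classes of equal size).

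Next I would verify that $\sigma$ also preserves feasibility. The diagonal constraints $\Gammab_{c,c}\le 1/\tau$ are permuted among themselves, and the PSD lifting constraint $\Gb\succcurlyeq 0$ with $\Gb_{i,j}=\Gammab_{y_i,y_j}$ is preserved because permuting the classes of equal size corresponds to a permutation $\Pibf$ of the $n$ sample indices (blockwise swapping the $n_{c_1}$-block with the $n_{c_2}$-block), and $\Pibf\Gb\Pibf^\top\succcurlyeq 0 \iff \Gb\succcurlyeq 0$. Hence $\sigma$ maps the feasible set of \eqref{eq:gamma_cvx} bijectively onto itself while preserving the objective. Therefore, if $\opt\Gammab$ is optimal, so is $\sigma\cdot\opt\Gammab$. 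By \lem\ref{lem:unique_main} the optimizer $\opt\Gb$ of \eqref{eq:SCL_UFM_cvx} is unique, and through the correspondence \eqref{eq:G_NC} this forces $\opt\Gammab$ to be unique as well; consequently $\sigma\cdot\opt\Gammab = \opt\Gammab$. Reading off the fixed-point condition entry-by-entry gives exactly $\opt\Gammab_{c_1,c_1} = \opt\Gammab_{c_2,c_2}$ and $\opt\Gammab_{c,c_1} = \opt\Gammab_{c,c_2}$ for all $c\neq c_1,c_2$, which is \eqref{eq:gamma_cond}.

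The main obstacle — really the only subtlety — is making the passage from uniqueness of $\opt\Gb$ to uniqueness of $\opt\Gammab$ airtight: one must note that the block-constant form \eqref{eq:G_NC} means the map $\Gammab\mapsto\Gb$ is injective on matrices of that form (distinct $\Gammab$ give distinct $\Gb$), so that the established uniqueness of $\opt\Gb$ transfers. A secondary point to handle carefully is that $\opt\Gb$ is guaranteed to have the block form \eqref{eq:G_NC} in the first place: this uses \lem\ref{lem:stationary_NC}, which gives NC at $\Hbopt$, hence $\opt\hb_i = \opt\mub_{y_i}$ and $\opt\Gb_{i,j} = \langle\opt\mub_{y_i},\opt\mub_{y_j}\rangle$ depends only on $(y_i,y_j)$. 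Everything else is a routine symmetry argument, and no delicate estimates are needed.
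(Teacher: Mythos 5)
Your proposal is correct and follows essentially the same route as the paper's proof: both exploit the invariance of the objective and the feasible set of \eqref{eq:gamma_cvx} under the class swap $c_1\leftrightarrow c_2$ (realized at the sample level by a block permutation preserving $\Gb\succcurlyeq 0$) and then invoke the uniqueness of the optimizer from \lem\ref{lem:unique_main}. The only difference is presentational — you state the fixed-point conclusion directly while the paper argues by contradiction — and your extra remark on the injectivity of $\Gammab\mapsto\Gb$ on block-constant matrices is a worthwhile clarification of a step the paper leaves implicit.
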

\begin{proof}
    Without loss of generality, let $c_1=1$ and $c_2=2$, and define the permutation matrix $\Rb:=\Rb_{c_1\,c_2}$ as follows:
    \begin{definition}[Permutation matrix] 
    For $i,j\in[k]$, the permutation matrix $\Rb_{ij}\in\R^{k\times k}$ is the matrix that swaps rows $i$ and $j$ of a matrix when multiplied by left. Equivalently, $\Rb_{ij}$ is the identity matrix $\Id_n$ with the rows $i$ and $j$ swapped.  
\end{definition}
Note that $\Rb^\top=\Rb$. For any $\Gammab\in\R^{k\times k}$, $\Rb\Gammab$ swaps row $c_1$ and $c_2$ and $\Gammab\Rb$ swaps columns $c_1$ and $c_2$.

Recall that $\opt{\Gammab}$ and $\opt{\Gb}$ are symmetric. We prove the lemma by contradiction. Specifically, if the lemmas condition does not hold, we show that by swapping rows $c_1$ and $c_2$ of $\Gammab$, followed by swapping the columns of the same index, the (optimal) loss value remains the same, while the optimizer changes. 

Suppose \eqref{eq:gamma_cond} does not hold. Define $\widetilde\Gammab=\Rb\opt{\Gammab}\Rb$. Since $n_1=n_2$, we have $f(\Gammab^*)=f(\widetilde{\Gammab})$, while by assumption, $\widetilde\Gammab\neq\opt{\Gammab}$. Now, define $\widetilde{\Rb}$ as follows,
\begin{align*}
    \widetilde{\Rb}:= \begin{bmatrix}
        0 & \Id_{n_1} & 0 & 0 &\cdots & 0\\
        \Id_{n_2} & 0 & 0 & 0 &\cdots & 0\\
        0 & 0 & \Id_{n_3} & 0 &\cdots & 0\\
        0 & 0 & 0 & \Id_{n_4} &\cdots & 0\\
        \vdots & \vdots & \vdots  & \vdots& \ddots & \vdots\\
        0 & 0 & 0 & 0 &\cdots & \Id_{n_k}
    \end{bmatrix}_{n\times n}.
\end{align*}

Then, it is straightforward to check $\Gbtilde=\widetilde{\Rb}\Gb^*\widetilde{\Rb}$ and $\widetilde\Gammab$ are feasible in the problem \eqref{eq:gamma_cvx}. Since $f(\Gammab^*)=f(\widetilde{\Gammab})$, $(\widetilde{\Gammab},\widetilde{\Gb})$ should also be optimal. This contradicts the uniqueness of the solution by Lemma \ref{lem:unique_main}, and the proof is complete.

\end{proof}

Given \lem\ref{lem:Gamma_symm}, we can show that in the STEP-imbalanced or balanced case (as a special case) setup, the global solution $\opt{\Gb}$ has a simpler form. Recall that in a $(R,\rho)$-STEP setting,
\begin{align*}
    n_c=n_{c'}\quad \text{if}\quad \begin{cases}
        c,c'\in\{1,...,\rhobar k\}\quad \text{(majority)}\\
        c,c'\in\{\rhobar k + 1, ..., k\} \quad \text{(minority)}
    \end{cases}
\end{align*}
Applying \eqref{eq:gamma_cond} for all pairs of classes with equal number of samples, we can show that there exists five scalars $\alpha_\text{maj},\alpha_\text{minor}$, $\beta_\text{maj},\beta_\text{minor}$, $\theta$ such that,
\begin{align*}
    \opt{\Gammab} = \begin{bmatrix}
        \Ab_{\text{maj}} && \theta\ones_{\rhobar k}\ones^\top_{\rho k}\\
        \theta\ones_{\rho k}\ones^\top_{\rhobar k} && \Ab_{\text{minor}}
    \end{bmatrix}, \qquad \begin{cases}
        \Ab_{\text{maj}} = (\alpha_\text{maj} - \beta_\text{maj})\Id_{\rhobar k} + \beta_\text{maj}\ones_{\rhobar k}\ones_{\rhobar k}^\top \\
    \Ab_{\text{minor}} = (\alpha_\text{minor} - \beta_\text{minor})\Id_{\rhobar k} + \beta_\text{minor}\ones_{\rhobar k}\ones_{\rhobar k}^\top
    \end{cases}.
\end{align*}
This proves \lem\ref{lem:scl_symm_global}. In the special case of balanced data, we can further show that $\alpha_\text{maj}=\alpha_\text{minor}=:\alpha$ and  $\beta_\text{maj}=\beta_\text{minor}=:\beta$. This reduces to,
\begin{align}\label{eq:balanced_G_app}
    \Gb^* = \left(\alpha - \beta\right)\Id_n + \left((\beta-\theta)\Id_k + \theta \Jd_k\right) \otimes \Jd_{(\frac{n}{k})},
\end{align}
where $\Jd_m=\ones_m\ones_m^\top$.
\qed

\end{document}